\newtheorem{theorem}{Theorem}[section]
\ifwacvfinal\pagestyle{empty}\fi
\begin{document}

\title{Learning Generative Models of Tissue Organization with Supervised GANs}



\author[1]{Ligong Han}
\author[2]{Robert F. Murphy}
\author[1]{Deva Ramanan}
\affil[1]{Robotics Institute, Carnegie Mellon University}
\affil[2]{Computational Biology Department and Department of Biological Sciences, Carnegie Mellon University}

\maketitle
\ifwacvfinal\thispagestyle{empty}\fi

\begin{abstract}
A key step in understanding the spatial organization of cells and tissues is the ability to construct generative models that accurately reflect that organization. In this paper, we focus on building generative models of electron microscope (EM) images in which the positions of cell membranes and mitochondria have been densely annotated, and propose a two-stage procedure that produces realistic images using Generative Adversarial Networks (or GANs) in a supervised way. In the first stage, we synthesize a label ``image'' given a noise ``image'' as input, which then provides supervision for EM image synthesis in the second stage. The full model naturally generates label-image pairs. We show that accurate synthetic EM images are produced using assessment via (1) shape features and global statistics, (2) segmentation accuracies, and (3) user studies. We also demonstrate further improvements by enforcing a reconstruction loss on intermediate synthetic labels and thus unifying the two stages into one single end-to-end framework.
\end{abstract}


\section{Introduction} \label{sec:intro}
\begin{figure}
\centering
    \subfloat[Generative pipeline]{%
    \begin{minipage}{1\linewidth}
    \centering
    \includegraphics[width=0.95\linewidth]{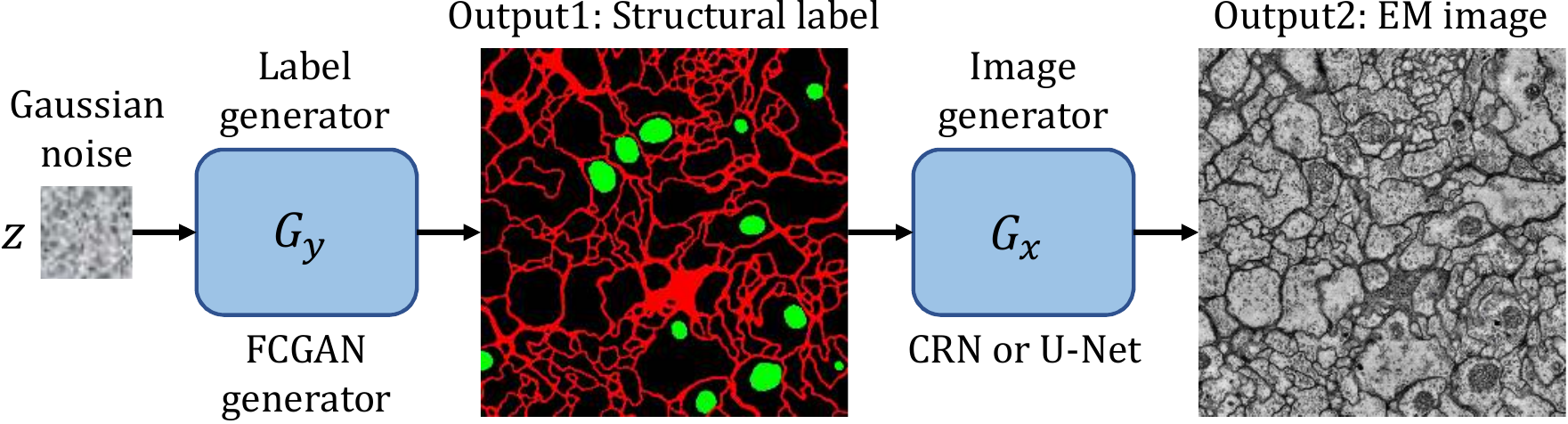}
    \end{minipage}%
    }\par
    \subfloat[Ground-truth]{%
    \begin{minipage}{0.33\linewidth}
    \centering
    \includegraphics[width=0.95\linewidth]{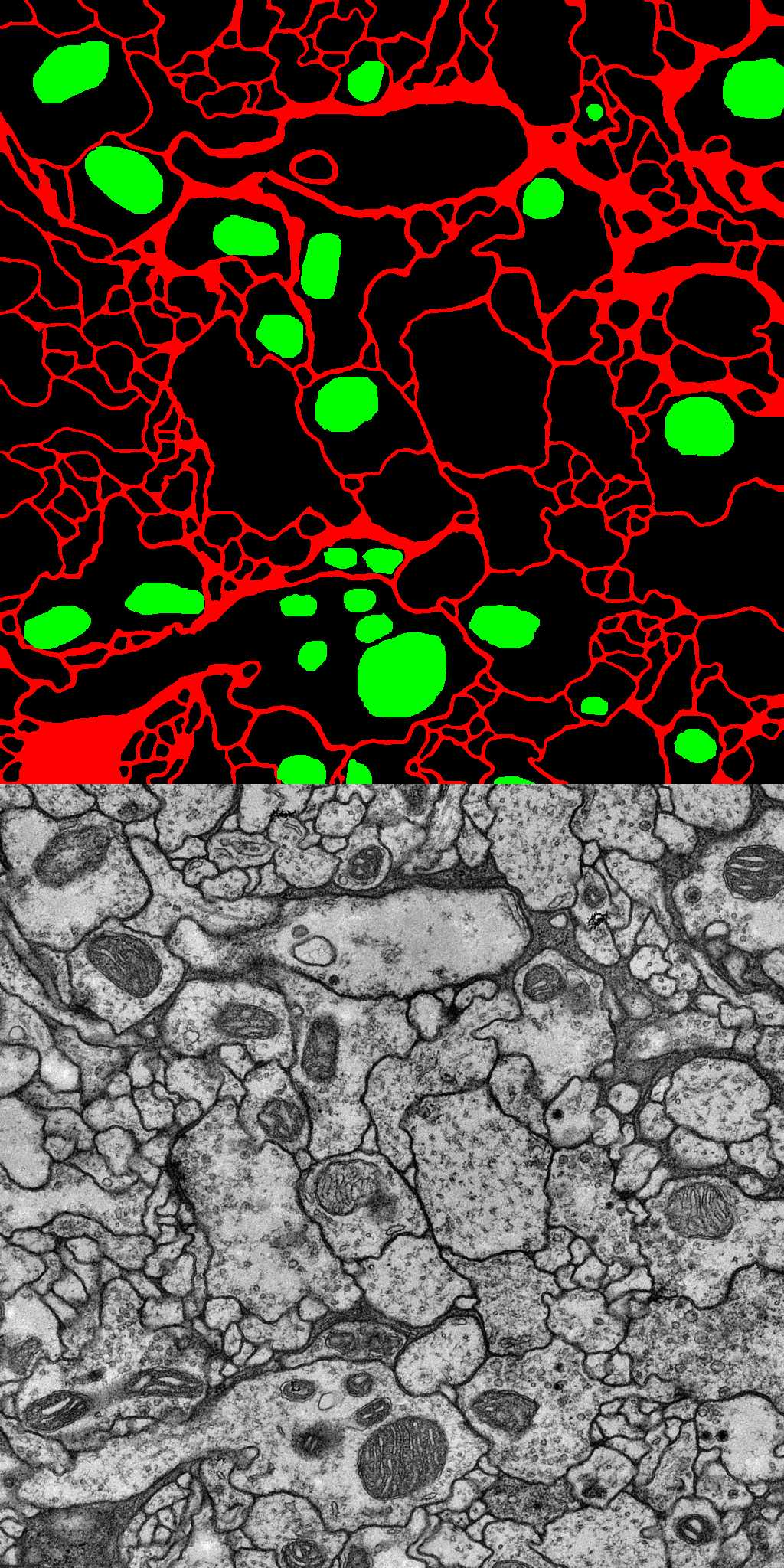}
    \end{minipage}%
    }
    \subfloat[SGAN (ours)]{%
    \begin{minipage}{0.33\linewidth}
    \centering
    \includegraphics[width=0.95\linewidth]{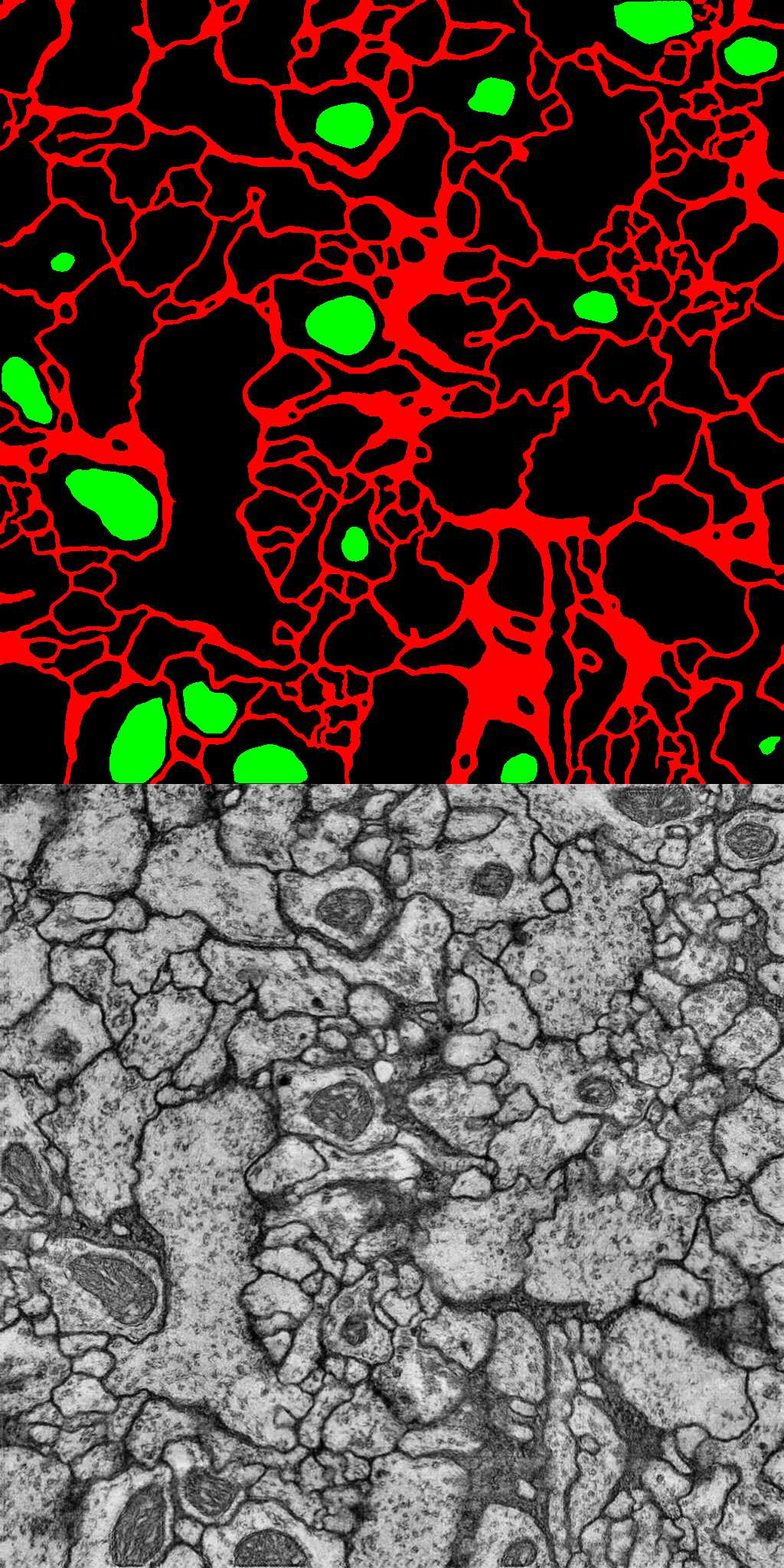}
    \end{minipage}%
    }
    \subfloat[UnsupervisedGAN]{%
    \begin{minipage}{0.33\linewidth}
    \centering
    \includegraphics[width=0.95\linewidth]{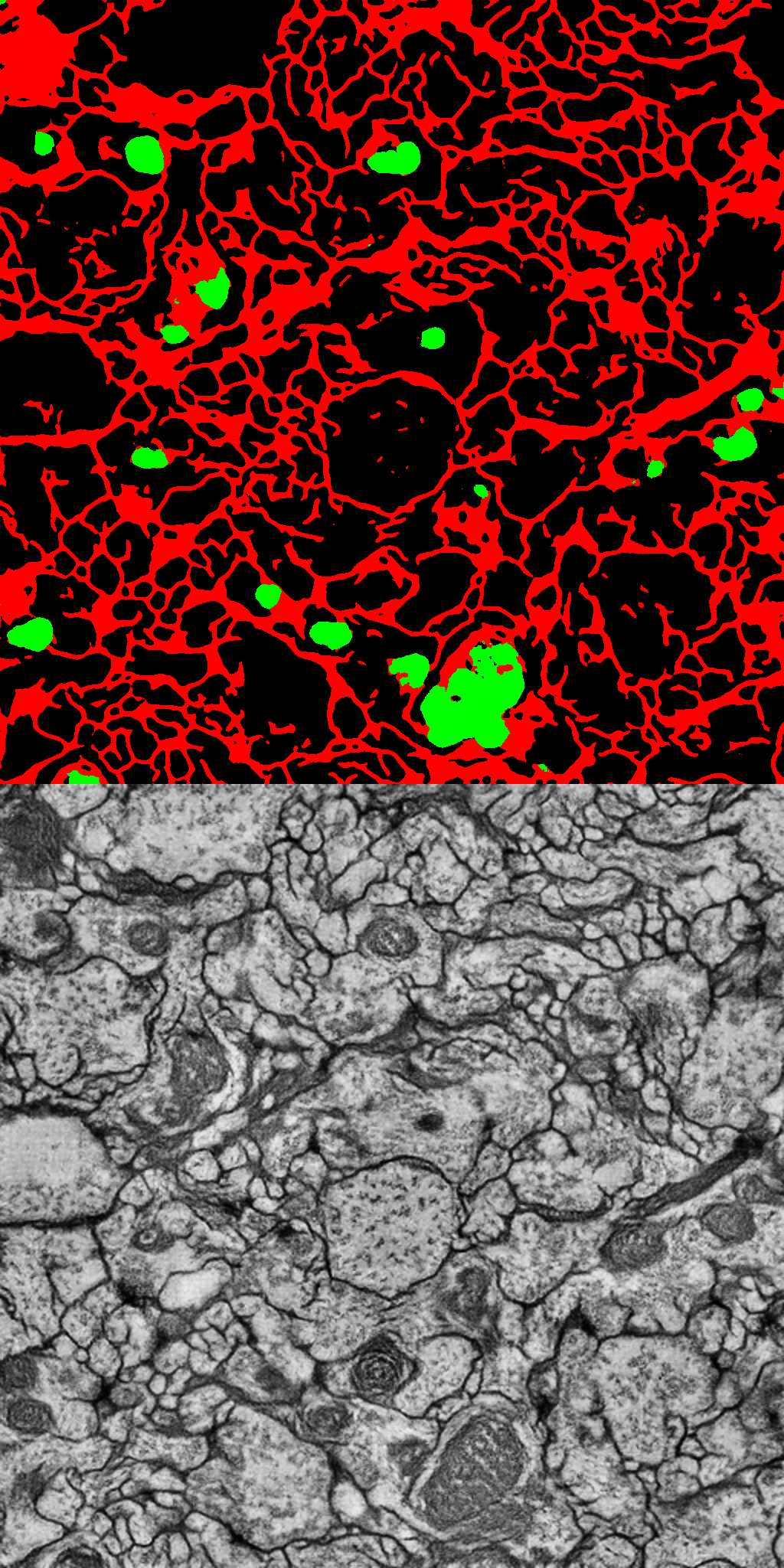}
    \end{minipage}%
    }
    \caption{(a) Generative pipeline: Given noise ``image'' $z$ sampled from a Gaussian distribution, our label generator $G_y$ generates a label image, which is then translated into an EM image by $G_x$. (b) Ground-truth label-image pair. (c) Label and image pair generated by our supervised GANs (SGAN), that is capable of generating continuous membranes (red lines) and correctly positioned mitochondria (green blobs). (d) Image synthesized by unsupervised GANs, in which the label is generated by a pre-trained semantic segmentation network. Unsupervised GAN is able to produce pixel-level details locally but fails to capture structures globally.}
    \label{fig:sample}
\end{figure}
Much research in the life sciences is now driven by large amounts of biological data acquired through high-resolution imaging~\cite{chessel2017overview,meijering2016imagining}. Such data represents an important application domain for automated machine vision analysis. Most past work has been {\em discriminative} in nature, focusing on trying to determine whether imaged samples differ between different patients, tissues, cell types or treatments~\cite{boland2001neural,
carpenter2006cellprofiler}.  A more recent focus has been on constructing {\em generative} models, especially of cells or tissues~\cite{zhao2007automated,svoboda2011generation}.  Such generative approaches are required in order to be able to combine spatial information on different cell types or cell organelles learned from separate images (and potentially different imaging modalities) into a single model. This is needed because of the difficulty of visualizing all components in a single image.  Images can be used to perform spatially-accurate simulations of cell or tissue biochemistry~\cite{loew2001virtual}, and synthetic images that combine many components can dramatically enhance the accuracy and usefulness of such simulations.

\noindent {\bf Microscopy imaging:} At the cellular scale, the dominant modes of imaging used are fluorescence microscopy (FM) and electron microscopy (EM). From the machine vision perspective, these methods differ dramatically in their resolution, noise, and the availability of labels for particular structures. FM works by tagging particular molecules or structures with fluorescence probes, adding a powerful form of sparse biological supervision to the captured images (which does not require human intervention). However, the spatial resolution of FM ranges from a limit of approximately 250 nm for traditional methods to 20-50 nm for super-resolution methods. By contrast, EM allows for significantly higher resolution (0.1-1 nm per pixel), but ability to automatically produce labels is limited and manual annotation can be very time-consuming. Analysis of EM images is also challenging because they contain lower signal-to-noise ratios than FM.

\noindent {\bf Our goal:} We wish to build holistic generative models of cellular structures visible in high-resolution microscopy images. In the following, we point out several unique aspects of our approach, compared to related work from both biology and machine learning.

\noindent {\bf Data:} 
We focus on EM images that contain enough resolution to view structures of interest. This in turns means that supervised labels (e.g., organelle segmentation masks) will be difficult to acquire. Indeed, it is quite common for standard EM benchmark datasets to contain only tens of images, illustrating the difficulty of acquiring human-annotated labels~\cite{arganda2015crowdsourcing}. Most work has focused on segmentation of individual cells or organelles within such images~\cite{jones2013neuron}. In contrast, we wish to build models of {\em multiple} cells and their internal organelles, which is particular challenging for brain tissue due to the overlapping meshwork of neuronal cells.

\noindent {\bf Generative models:} Past work on EM image analysis has focused on discriminative membrane detection~\cite{ciresan2012deep,lee2015recursive}. Here we seek a high-resolution generative model of cells and the spatial organization of their component structures. Generative models of cell organization have been a long sought-after goal~\cite{zhao2007automated,svoboda2009generation}, because at some level, such models are a required component of any behavioral cell model that depends on constituent proteins within organelles.

\noindent {\bf GANs:} First and foremost, we show that generative adversarial networks (GANs)~\cite{goodfellow2014generative} can be applied to build remarkably-accurate generative models of multiple cells and their structures, significantly outperforming prior models designed for FM images. To do so, we add three innovations to GANs: First, in order to synthesize large high-resolution images (similar to actual recorded EM images), we introduce {\em fully-convolutional} variants of GANs that exploit the spatial stationarity of cellular images. Note that such stationarity may not present in typical natural imagery (which might contain, for example, a characteristic horizon line that breaks translation invariance). Secondly,  in order to synthesize natural geometric structures across a variety of scales, we add {\em multi-scale discriminators} to guide the generator to produce images with realistic multi-scale statistics. Thirdly, and most crucially, we make use of {\em supervision} to guide the generative process to produce semantic structures (such as cell organelles) with realistic spatial layouts. Much of the recent interest in generative models (at least with respect to GANs) has focused on unsupervised learning. But in some respect, synthesis and supervision are orthogonal issues. We find that standard GANs do quite a good job of generating texture, but sometimes fail to capture global geometric structures. 
We demonstrate that by adding supervised structural labels into the generative process, one can synthesize considerably more accurate images than an unsupervised GAN. 

\noindent {\bf Evaluation:} A well-known difficulty of GANs is their evaluation. By far, the most common approach is qualitative evaluation of the generated images. Quantitative evaluation based on perplexity (the log likelihood of a validation set under the generative distribution) is notoriously difficult for GANs, since it requires approximate optimization techniques that are sensitive to regularization hyper-parameters~\cite{metz2016unrolled}. Other work has proposed statistical classifier tests that are sensitive to the choice of classifier~\cite{lopez2016revisiting}. In our work, we use our supervised GANs to generate image-labels pairs that can be used to train discriminative classifiers, yielding quantitative improvements in prediction accuracy. Moreover, we consider the literature on generative cell models and use previously proposed metrics for evaluating generative models, including the consistency of various shape feature statistics across real vs generated images, as well as the stability of discriminative classifiers (for semantic labeling)  across real vs generated data~\cite{zhao2007automated}. We also perform user studies to measure a user's ability to distinguish real versus generated images. Crucially, we compare to strong baselines for generative models, including established parametric shape-based models as well as non-parametric generative models that memorize the data.

\section{Related Work}
There is a large body of work on GANs. We review the most relevant work here. 

\noindent {\bf GANs:} Our network architecture is based on DCGAN~\cite{radford2015unsupervised}, which introduces convolutional network connections. We make several modifications suited for processing biological data, which tends to be high-resolution and encode spatial structures at multiple scales. As originally defined, the first layer is {\em not} convolutional since it processes an input noise ``vector''. We show that by making all network connections convolutional (by converting the noise vector to a noise ``image''), the entire generative model is convolutional. This in turns allows for efficient training (through learning on small convolutional crops) and high-quality image synthesis (through generation of larger noise images). We find that multi-scale modeling is crucial to synthesizing accurate spatial structures across varying scales. While past work has incorporated multi-scale cues into the generative process~\cite{denton2015deep}, we show that multiscale {\em discriminators} help further produce images with realistic multi-scale statistics. 

\noindent {\bf Supervision:} Most GANs work with unsupervised data, but there are variants that employ some form of auxiliary labels. Conditional GANs make use of labels to learn a GAN that synthesizes pixels conditioned on an label image~\cite{pix2pix} or image class label~\cite{odena2016conditional}, but we use supervision to learn an end-to-end generative model that synthesizes pixels given a noise sample. Similarly, methods for semi-supervised learning with GANs~\cite{zhang2017structured} tend to factorize generative process into disentangled factors similar to our labels. However, such factors tend to be global (such as an image class label), which are easier to synthesize than spatially-structured labels. From this perspective, our approach is similar to~\cite{wang2016generative}, who factorizes image synthesis into separate geometry and style stages. In our case, we make use of semantic labels rather than metric geometry as supervision. Finally, most related to us is~\cite{osokin2017gans}, who uses GANs to synthesize fluorescent images using implicit supervision from cellular staining. Our work focuses on EM images, which are high resolution (and so allows for modeling of more detailed substructure), and crucially makes use of semantic supervision to help guide the generative process.

\section{Supervised GANs} \label{sec:sgan}
\begin{figure*}[t!]
\centering
   \includegraphics[width=.7\linewidth]{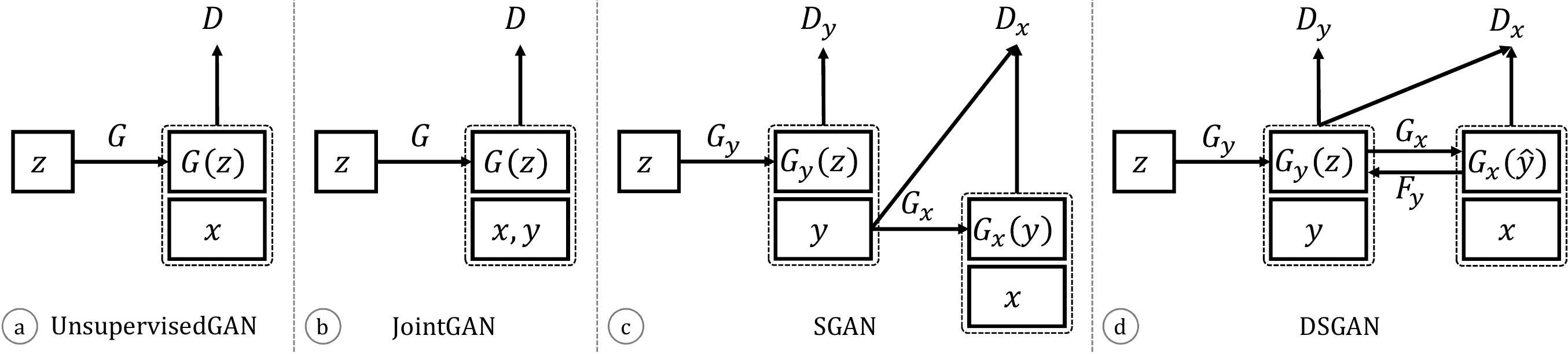}
\caption{We compare different GAN architectures for injecting supervision (provided with labels $y$) into a generative model of $x$. (a) A standard unsupervised GAN for generating $x$. (b) A GAN defined over a joint variable $x' = (x,y)$.  (c) SGAN, which is a supervised GAN that is composed of an initial GAN $\{G_y, D_y\}$ that generates labels $y$ followed by a conditional GAN $\{G_x, D_x\}$ that generates images $x$ from $y$. 
(d) A Deeply supervised GAN (DSGAN), equivalent to a single GAN that is provided deep supervision for generating labels at an intermediate stage. Performance is further improved by adding a reconstructor $F_y$ that ensures that generated images can be used to predict labels with low reconstruction error (Eq.~\ref{eq:dsgan_cgan_cycle}). }
\label{fig:sgan}
\end{figure*}

A standard GAN, originally proposed for unsupervised learning, can be formulated with a minimax value function $V(G, D)$:
\begin{align}
&\min_G \max_D V(G,D) \qquad \qquad \text{where} \nonumber\\
V(G, D) = &\mathbb{E}_{x\sim p_{x}}[\log(D(x))] + \mathbb{E}_{z\sim p_{z}}[\log(1-D(G(z)))] \nonumber
\end{align}
\vspace*{-27pt}
\begin{flalign}
\text{[\bf{UnsupervisedGAN}]} && \label{eq:gan}
\end{flalign}
\noindent and $x$ denotes image and $z$ denotes a latent noise vector. As defined, the minimax function can be optimized with samples from the marginal data distribution $p_x$ and thus no supervision is needed. As shown in Fig.~\ref{fig:sample}, this tends to accurately generate low-level textures but sometimes fails to capture global image structures.
Assume now that we have access to image labels $y$ that specify spatial structures of interest. Can we use these labels to train a better generator? Presumably the simplest approach is to define a ``classic'' GAN over a joint variable $x' = (x,y)$:
\begin{align}
V(G,D) = &\mathbb{E}_{x,y\sim p_{xy}}[\log(D({x,y}))] + \nonumber\\
\text{[\bf{JointGAN}]} \enskip \qquad &\mathbb{E}_{z\sim p_{z}}[\log(1-D(G(z)))] \qquad \quad \label{eq:jointgan} 
\end{align}
\noindent {\bf Factorization:} Rather than learning a generative model for the joint distribution over $x,y$, we can factorize it into $p(x,y) = p(y)p(x|y)$ and learn generative models for each factor. This factorization makes intuitive sense since it implicitly imposes a causal relation~\cite{kocaoglu2017causalgan}: first geometric labels are generated with $G_y: z \mapsto y$, and then image pixels are generated conditioned on the generated labels, $G_x: y \mapsto x$. We refer to this approach as {\em SGAN} (supervised GANs), as illustrated in Fig. \ref{fig:sgan}-b,c: 
\begin{align}
V(G, D) = &V_y(G_y,D_y) +  V_x(G_x,D_x) \quad \text{where} \nonumber \\
V_y(G_y,D_y) = & \mathbb{E}_{y\sim p_{y}}[\log(D_y(y))] + \nonumber\\
   & \mathbb{E}_{z\sim p_{z}}[\log(1-D_y(G_y(z)))] \quad \text{and} \nonumber\\
V_x(G_x,D_x) = & \mathbb{E}_{x,y\sim p_{xy}}[\log(D_x(x,y))] + \nonumber\\
\text{[\bf{SGAN}]} \qquad \qquad & \mathbb{E}_{y\sim p_{y}}[\log(1-D_x(G_x(y),y))]. \quad \label{eq:sgan_cgan} 
\end{align}
In theory, one could also factorize the joint into $p(x,y) = p(x) p(y|x)$, which is equivalent to training a standard unsupervised GAN for $x$ and a conditional model for generating labels from $x$. The latter can be thought of as a semantic segmentation network. We compare to such an alternative factorization in our experiments, and show that conditioning on labels first produces significantly more accurate samples of $p(x,y)$.

\noindent {\bf Optimization:} Because value function $V(G, D)$ decouples, one can train $\{D_y, G_y\}$ and $\{D_x, G_x\}$ independently: 
\begin{align}
\min_G \max_D V(G,D) =& \min_{G_y}\max_{D_y} V_y(G_y,D_y) + \nonumber\\
& \min_{G_x}\max_{D_x} V_x(G_x,D_x) \label{eq:sgan_decouple}
\end{align}

Using arguments similar to those from \cite{goodfellow2014generative}, one can show that SGAN can recover true data distribution where the discriminator $D$ and generators $G$ are optimally trained:
\begin{theorem}
\label{thm:optimality}
The global minimum of $C(G) = \max_D V(G,D)$ is achieved if and only if $q(y) = p(y)$ and $q(x | y) = p(x | y)$, where $p$'s are true data distributions and $q$'s are distributions induced by $G$.
\end{theorem}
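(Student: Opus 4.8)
The plan is to exploit the decoupling already recorded in Eq.~\ref{eq:sgan_decouple}. Since $D_y$ and $G_y$ appear only in $V_y$, while $D_x$ and $G_x$ appear only in $V_x$ (note that the conditioning labels in $V_x$ are drawn from the \emph{true} marginal $p_y$, not from $G_y$), the cost $C(G)=\max_D V(G,D)$ splits as
\begin{align}
C(G) = C_y(G_y) + C_x(G_x), \nonumber
\end{align}
where $C_y(G_y)=\max_{D_y}V_y$ and $C_x(G_x)=\max_{D_x}V_x$. Because the two summands share no parameters, $C(G)$ is globally minimized if and only if each summand is individually minimized, so it suffices to analyze the two terms separately.

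For the label term $V_y$, I would invoke the argument of \cite{goodfellow2014generative} essentially verbatim: $V_y$ is an ordinary unsupervised GAN on $y$, whose optimal discriminator is $D_y^*(y)=p(y)/\big(p(y)+q(y)\big)$. Substituting back yields $C_y(G_y)=-\log 4 + 2\,\mathrm{JSD}\big(p(y)\,\|\,q(y)\big)$, which is bounded below by $-\log 4$, with equality if and only if $q(y)=p(y)$.

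The conditional term $V_x$ is the crux and requires the only genuinely new reasoning. Writing $V_x$ as the nested expectation $\mathbb{E}_{y\sim p_y}\big[\mathbb{E}_{x\sim p(x|y)}[\log D_x(x,y)] + \mathbb{E}_{x\sim q(x|y)}[\log(1-D_x(x,y))]\big]$, I would fix $y$ and observe that the inner objective is maximized \emph{pointwise} in $x$ by $D_x^*(x,y)=p(x|y)/\big(p(x|y)+q(x|y)\big)$, exactly as in the unconditional case but now for each slice $y$. The measure-theoretic justification is that, since $D_x$ is an unconstrained function of the pair $(x,y)$, the supremum over functions equals the integral of the pointwise suprema. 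Substituting $D_x^*$ gives $C_x(G_x)=-\log 4 + 2\,\mathbb{E}_{y\sim p_y}\big[\mathrm{JSD}\big(p(x|y)\,\|\,q(x|y)\big)\big]$, a nonnegatively weighted average of Jensen--Shannon divergences that attains its minimum $-\log 4$ if and only if $q(x|y)=p(x|y)$ for $p_y$-almost every $y$.

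Combining the two pieces, $C(G)$ reaches its global minimum $-2\log 4$ exactly when $q(y)=p(y)$ and $q(x|y)=p(x|y)$ on the support of $p_y$, which is precisely the claim. I expect the main obstacle to be the conditional analysis: one must (i) justify interchanging the ``$\max$ over the discriminator function'' with ``integration over $y$,'' so that the optimal $D_x^*$ can be chosen per-slice, and (ii) notice that $V_x$ only constrains $q(x|y)$ on $\mathrm{supp}(p_y)$. The latter is harmless, because the label term independently forces $q(y)=p(y)$, so the two conditions together pin down the full joint $q(x,y)=p(x,y)$.
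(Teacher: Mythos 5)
Your proof is correct and follows essentially the same route as the paper's: decouple $V$ into $V_y$ and $V_x$, derive the optimal discriminators $D_y^*(y)=p(y)/(p(y)+q(y))$ and $D_x^*(x,y)=p(x|y)/(p(x|y)+q(x|y))$, and reduce each cost to (an expectation over $p_y$ of) Jensen--Shannon divergences whose vanishing characterizes $q(y)=p(y)$ and $q(x|y)=p(x|y)$ on the support of $p_y$. The only cosmetic differences are that the paper obtains the optimal discriminators via the Euler--Lagrange equation rather than your pointwise-supremum argument, and it records the same support caveat as your ``$p_y$-almost every $y$'' by writing $y \in \{y : p(y) > 0\}$.
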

\begin{proof}\renewcommand{\qedsymbol}{}
Given in Supplementary 1.
\end{proof}
\noindent {\bf End-to-end learning:} The above theorem demonstrates that SGANs will capture the true joint distribution over labels and data if trained optimally. However, when not optimally trained (because of optimization challenges or limited capacity in the networks), one may obtain better results through end-to-end training. Intuitively, end-to-end training optimizes $G_x(y)$ on samples of labels ${\hat y}$ produced by the initial generator $G_y(z)$, rather than ground-truth labels $y$. To formalize this, one can regard $G_y$ and $G_x$ as sub-networks of a {\em single} larger generator which is provided deep supervision at early layers: 
\begin{align}
V_x(G_x,D_x) = & \mathbb{E}_{x,y\sim p_{xy}}[\log(D_x(x,y))] + \label{eq:dsgan_cgan}\\
   & \mathbb{E}_{z\sim p_{z}}[\log(1-D_x(G_x(G_y(z)),G_y(z)))]. \nonumber
\end{align}
However in practice, samples from an imperfect $G_y$ makes it even harder to train $G_x$. Indeed, we observe that $G_x$ produces poor results when synthetic training labels are introduced. One possible reason is that discriminator $D_x$ will be focused on the differences between the real and predicted labels rather than correlations between labels and images. (Please refer to section~\ref{sec:exp} and Supplementary 3 for more analysis.) To avoid this, we force the generator to learn such correlations by also learning a {\em reconstructor}  $F_y(x): x \mapsto y$ that re-generates labels from images. We add a reconstruction loss $\mathcal{L}$ (similar to a ``cycle GAN''~\cite{zhu2017unpaired}) that ensures that $G_x$ will produce an image from which an accurate label can be reconstructed. We refer to this approach as a $DSGAN$ (Deeply Supervised GAN):
\begin{align}
\min_{G, F} \max_D V_y&(G_y,D_y) + V_x(G_x,F_y,D_x) \qquad \text{where} \nonumber\\
V_x(G_x,F_y,D_x) = & \mathbb{E}_{x\sim p_{xy}}[\log(D_x(x,y))] + \nonumber\\
   & \mathbb{E}_{y\sim p_{y}}[\log(1-D_x(G_x(y),y))] + \nonumber\\
   & \lambda_{reg} \mathbb{E}_{x,y\sim p_{xy}}[\mathcal{L}(y,F_y(x))] + \nonumber\\
   & \lambda_{cyc} \mathbb{E}_{y\sim p_{y}}[\mathcal{L}(y,F_y(G_x(y)))] + \nonumber\\
   & \lambda_{cyc} \mathbb{E}_{z\sim p_{z}}[\mathcal{L}(G_y(z),F_y(G_x(G_y(z))))].\nonumber
\end{align}
\vspace*{-25pt}
\begin{flalign}
\text{[\bf{DSGAN}]} && \label{eq:dsgan_cgan_cycle}
\end{flalign}
The above training strategy is reminiscent of ``teacher forcing''~\cite{williams1989learning}, a widely-used technique for learning recurrent networks whereby previous predictions of a network are replaced with their ground-truth values (in our case, replacing $G_y(z)$ with $y$). The same optimality condition as in Theorem~\ref{thm:optimality} also holds for DSGANs.

\noindent {\bf Label editing:} Another advantage of SGAN or DSGAN is label editing, because editing in label space is much easier than in image space. This allows us to easily incorporate human priors into the generating process. For example, at test time, we can perform image processing on synthetic labels such as to remove discontinuous membranes or to remove mitochondria that are concave or replace with its convex hull.

\noindent {\bf Conditional label synthesis:} We can further split labels into $y = (y_1, y_2)$. This allows us to learn explicit conditionals that might be useful for simulation (e.g., synthesizing mitochondria given real cell membranes). This may be suggestive of interventions in a causal model (CausalGAN~\cite{kocaoglu2017causalgan}).

\section{Network Architectures}
In this section we outline our GAN network architectures, focusing on modifications that allow them to scale to high-resolution multi-scale biological images. Specifically, we first propose a fully-convolutional label generator which allows arbitrary output sizes; then we describe a novel multi-scale patch-discriminator to guide the generators to produce images with realistic multi-scale statistics. The fully-convolutional generator and multi-scale discriminators define a {\em fully-convolutional GAN} (FCGAN).

\noindent {\bf Fully-convolutional generator:} 
Since the shape of both membrane and mitochondria are invariant to spatial location, a fully convolutional network is desirable to model the generators. Generators in previous works such as DCGAN take a noise vector as input. As a result, the size of their output images is predefined by their network architecture, thus unable to produce arbitrarily sized images at test time. We therefore propose to feed a noise ``image'' instead of a vector into the generator. The noise ``image'' is essentially a 3D tensor with the first two dimensions corresponding to the spatial positions. As illustrated in Fig.~\ref{fig:fcgan_G}, to synthesize arbitrarily large labels, we only need to modify the spatial size of the input noise. Fully-convolutional generator is an instantiation of the label generator in Fig.~\ref{fig:sample}-a.
\begin{figure}
\centering
\subfloat{\includegraphics[scale=0.4]{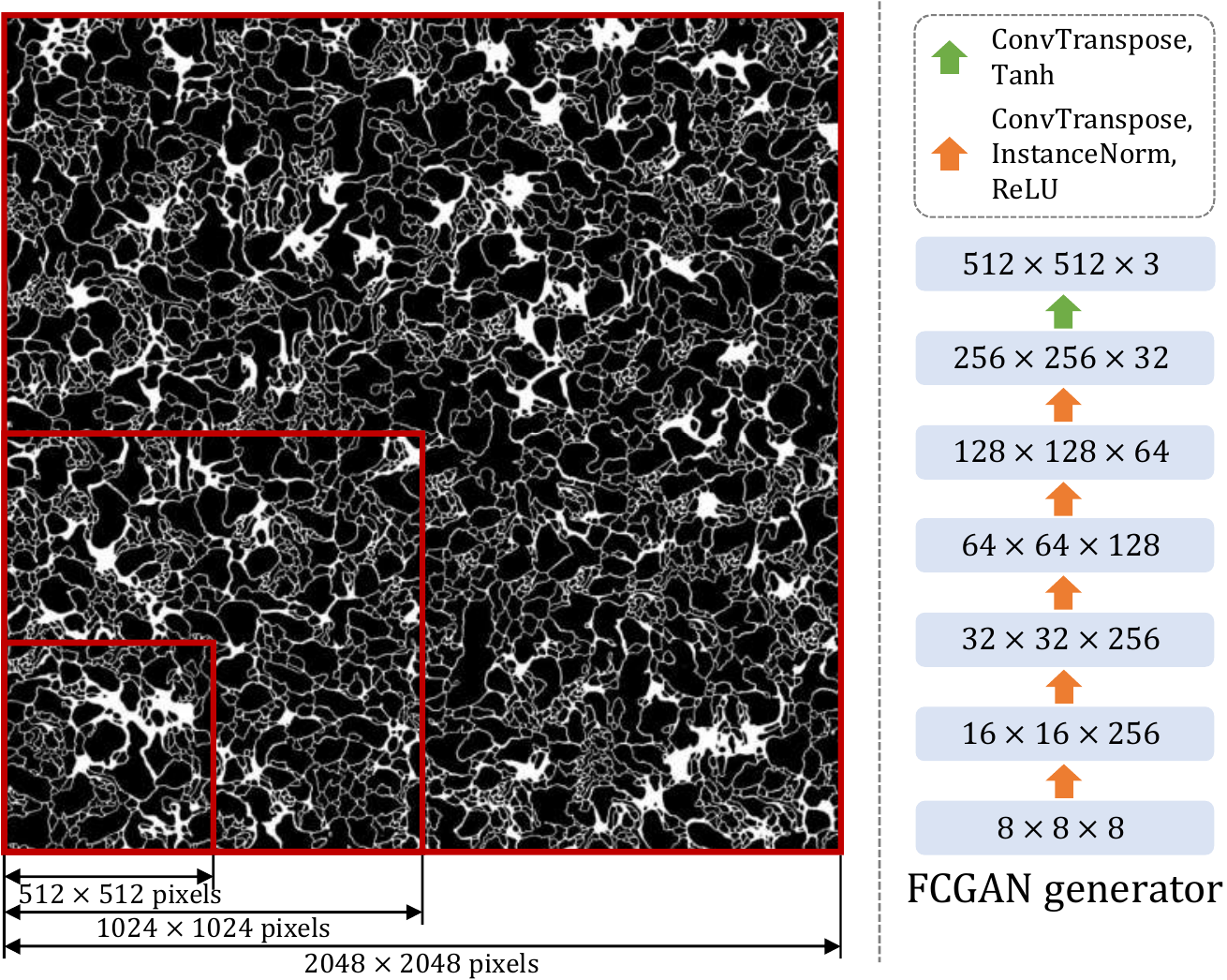}}
\caption{A fully-convolutional generator $G_y$ (FCGAN). Left: By changing the size of the input noise ``image'', our FCGAN generator can synthesize arbitrarily large labels. Right: Architecture of the fully-convolutional generator.}
\label{fig:fcgan_G}
\end{figure}

\noindent {\bf Multi-scale discriminator:} 
\begin{figure}
\centering
\includegraphics[scale=0.4]{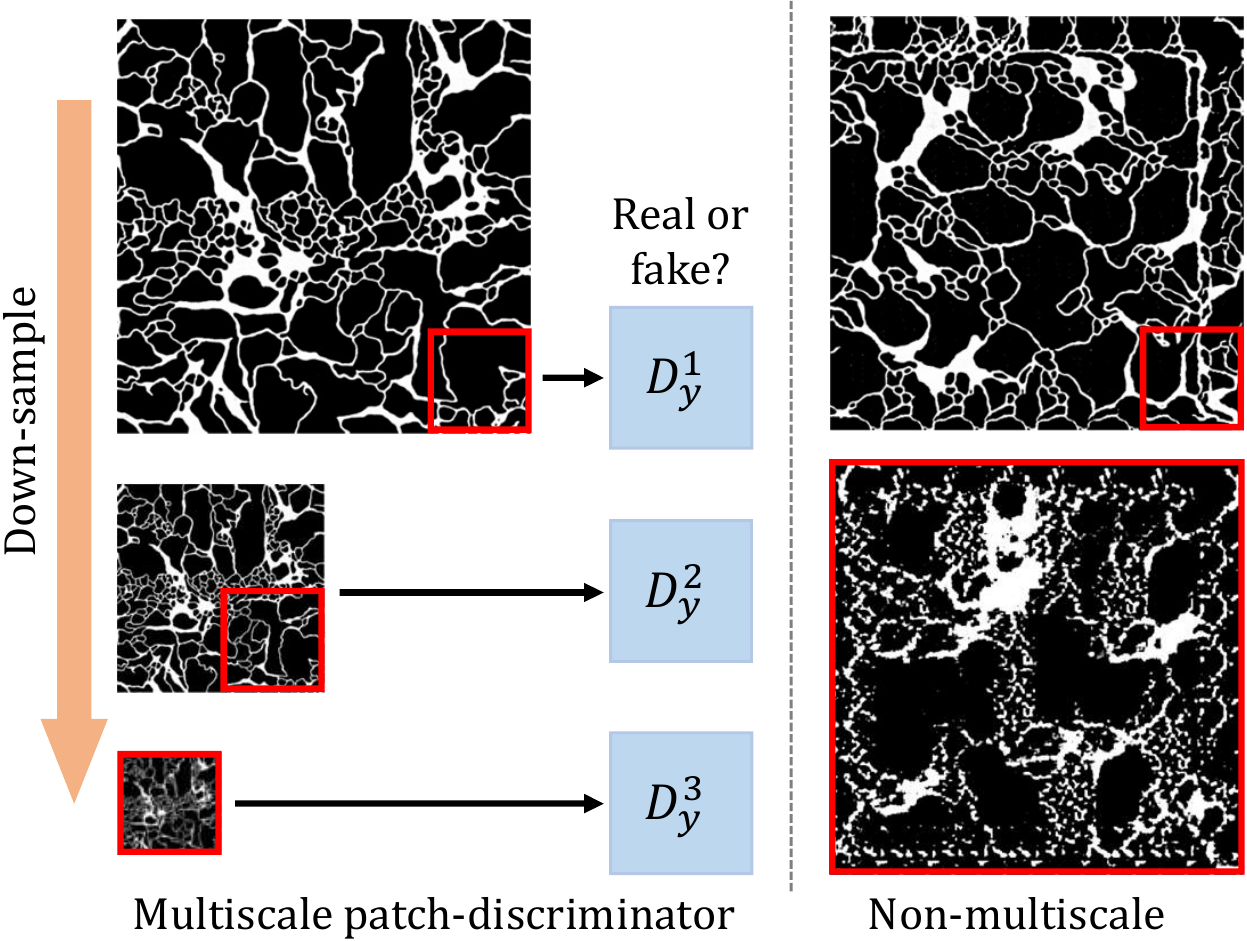}
\caption{Multi-scale discriminators for $D_y$ and $D_x$. Left: We construct an image pyramid from the generated label (or image), and feed patches from this pyramid to multiple patch-based discriminators. Right: Single-scale discriminators with small receptive fields (top) tend to produce accurate local structure, but inaccurate repetitive global structure. Similarly, single-scale discriminators with large receptive fields produce accurate global structure, but fail to generate accurate local textures.}
\label{fig:fcgan_D}
\end{figure}
We initially experiment with the patch-based discriminator network as in~\cite{pix2pix}, and find that the quality of synthesized labels relates to the patch size chosen for the discriminator. On the one hand, if we use a small patch size, the synthesized label has locally realistic patterns, but the global structure is wrong as it contains repetitive patterns (see Fig.~\ref{fig:fcgan_D} top-right). On the other hand, if we use a large patch-size, the output label image resembles a roughly plausible global structure, but lacks local details (see Fig.~\ref{fig:fcgan_D} bottom-right).

To ensure the generators produce both globally and locally accurate labels and images, we propose a multi-scale discriminator architecture. As illustrated in Fig.~\ref{fig:fcgan_D}, the input label (or image) is first down-sampled to different scales and then fed into individual discriminators. The final discriminator output is a weighted summation of the discriminators for each scales:
\begin{align}
V_y(G_y,D_y) = & \mathbb{E}_{y\sim p_{y}}[\sum_{i \in I} \lambda_i \log(D^i_y(\pi_i(y)))] + \label{eq:fcgan_ms}\\
   & \mathbb{E}_{z\sim p_{z}}[\sum_{i \in I} \lambda_i \log(1-D^i_y(\pi_i(G_y(z))))] \nonumber
\end{align}
Here, $i$ is the image pyramid level index, $\pi_i$'s denote down-sample transformations and $\lambda_i$'s are predefined coefficients.

\noindent {\bf Conditional generator:} Inspired by cascaded refinement networks (CRN) from~\cite{chen2017photographic}, we design architecturally similar generators for both {\em conditional image synthesis} ($y \rightarrow x$) and {\em conditional label synthesis} ($y_1 \rightarrow y_2$). Compared to U-Net~\cite{ronneberger2015u} which is originally adopted in pix2pix, CRN is less prone to mode collapse. Please see more discussions in Supplementary 2.
\begin{figure*}
\centering
\includegraphics[width=0.8\linewidth]{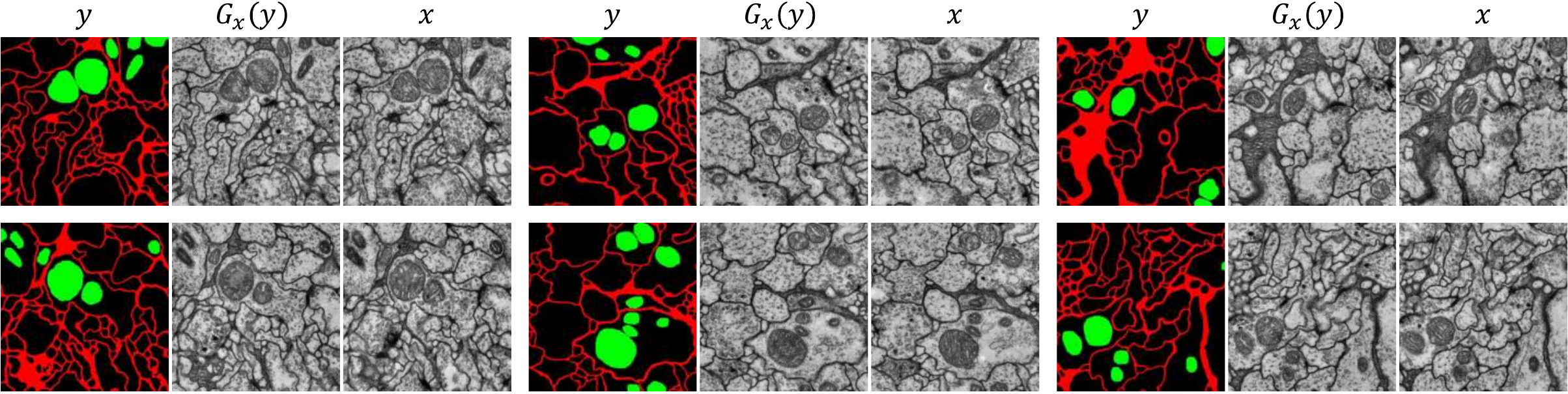}
\caption{Conditional image synthesis. Given true label $y$, we sample image $G_x(y)$, compared to real image $x$.}
\label{fig:gx}
\end{figure*}
\begin{figure*}
\begin{centering}
    \subfloat[Synthesized label-image pairs by SGAN.]{%
    \begin{minipage}{0.8\linewidth}
    \centering
    \includegraphics[width=1\linewidth]{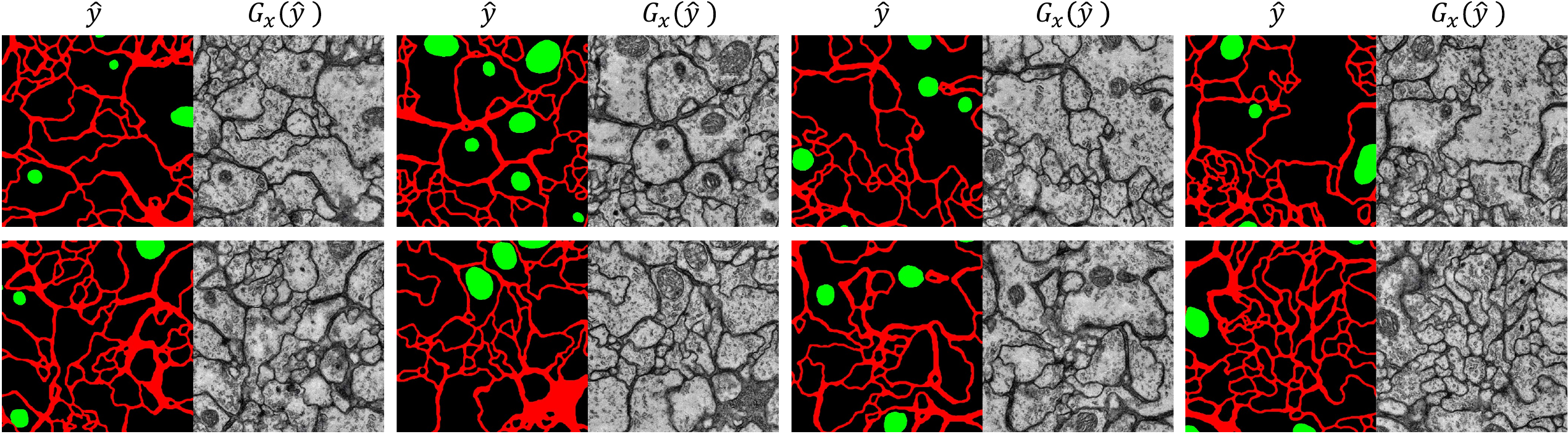}
    \end{minipage}%
    }\par
    \subfloat[Synthesized label-image pairs by JointGAN.]{%
    \begin{minipage}{0.4\linewidth}
    \centering
    \includegraphics[width=1\linewidth]{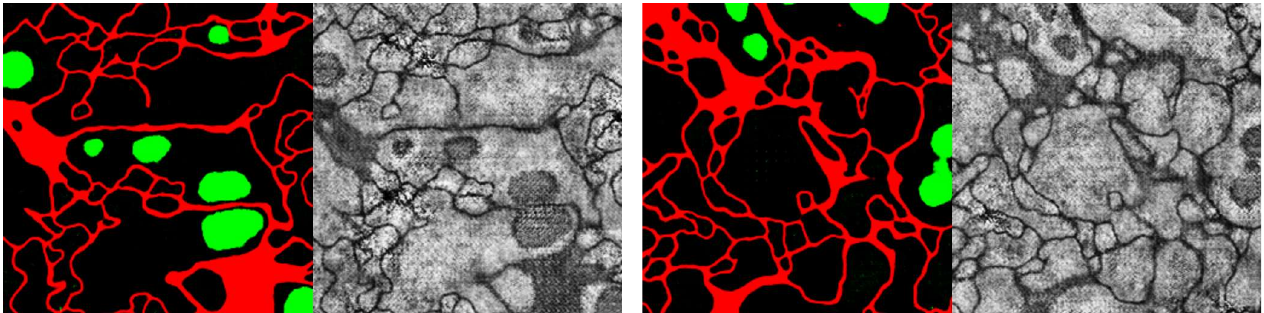}
    \end{minipage}%
    }
    \hspace{5pt}
    \subfloat[Synthesized label-image pairs by UnsupervisedGAN.]{%
    \begin{minipage}{0.4\linewidth}
    \centering
    \includegraphics[width=1\linewidth]{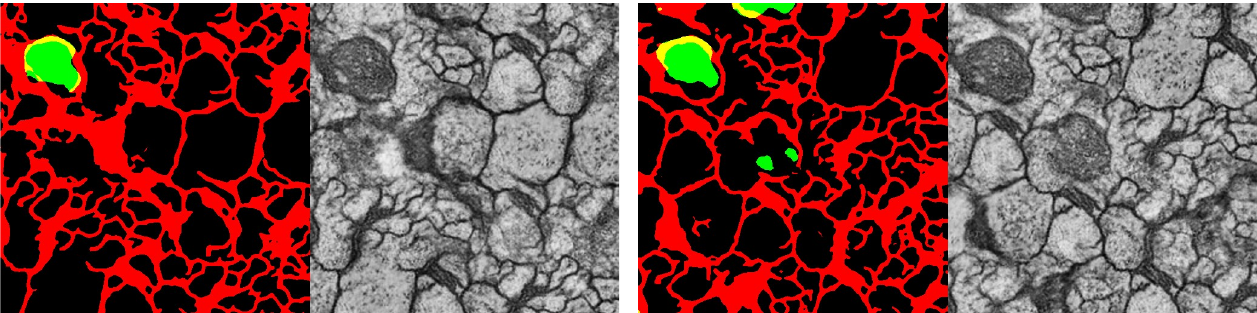}
    \end{minipage}%
    }\par
    \caption{Samples of our full model. (a) We first sample synthetic label $\hat{y}=G_y(z)$ from noise $z$, then generate image $G_x(\hat{y})$. We perform label editing (remove discontinuous membranes and concave mitochondria) on synthetic labels. (b) Label-image pairs directly synthesized by FCGAN. (c) Images are first generated by FCGAN then labels are inferred by an off-the-shelf segmentation network. Some pixels are labeled in yellow because because we use two separate segmentation networks for membranes and mitochondria.}
    \label{fig:sgan_sample}
\end{centering}
\end{figure*}

\section{Experiments} \label{sec:exp}
As illustrated in Fig.~\ref{fig:sample}-a, the proposed generative process contains two parts: (1) noise $\rightarrow$ label, (2) label $\rightarrow$ image. Thus our generative models output both labels and images that are paired. In this paper, we also evaluate our methods on these two levels: (1) labels, and (2) images. Particularly, on the label level we locally compare the {\em shape features} of single cells with real ones, and globally we compute {\em statistics} of multiple cells. On labels, we also evaluate the {\em model capacity}. On the image level, we measure image qualities by {\em segmentation accuracy}. Also, {\em user studies} are conducted on both levels.

\subsection{Metrics and Baselines} \label{sec:metric_baseline}
\noindent {\bf Shape features:} Following past work~\cite{zhao2007automated}, we evaluate the accuracy of synthetic images by (1) training a real/fake classifier, and (2) counting the portion of synthetic samples that fool the classifier. We train SVM classifiers on a set of 89  features~\cite{zhao2007automated} that have been demonstrated to very accurately distinguish cell patterns in FM images, and which are extracted from {\em label images} of single cells with mitochondria. Example statistics include 49 Zernike moment features, 8 morphological features, 5 edge features, 3 convex hull features \etc. 

\noindent {\bf Global statistics:} Such shape-based features used above are typically defined for a single cell. We therefore also extracted global statistics across multiple cells, including distributions of cell size, mitochondria size and mitochondria roundness~\cite{zheng2015traditional} \etc.

\noindent {\bf User studies:} We design an interface similar to that in~\cite{salimans2016improved}, where generated images are presented with a prior of $50\%$. Intermediate labels are edited for better visual quality (samples shown in Fig.~\ref{fig:sgan_sample}, cropped to $512 \times 512$).



\noindent {\bf Dataset:} We used a publicly available VNC dataset~\cite{vncdataset} that contains a stack of 20 annotated sections of the Drosophila melanogaster third instar larva ventral nerve cord (VNC) captured by serial section Transmission Electron Microscopy (ssTEM). The spatial resolution is $4.6 \times 4.6 \times 50~\text{nm/pixel}$. It provides segmentation annotations for cell membranes, glia, mitochondria and synapse. Through out experiments, the first 10 sections are used for training and the remaining 10 sections are used for validation.

\noindent {\bf Parametric baseline:} To construct baselines for our proposed methods, we compare to a well-established {\em parametric model} from~\cite{zhao2007automated} for synthesizing fluorescent images of single cells, which is trained by substituting nuclei with mitochondria. One noticeable disadvantage of this approach is that it lacks the capability to synthesize multiple cells. It also leverages supervision since the shape model is trained on labels.

\noindent {\bf Non-parametric baseline:} We also consider a non-parametric baseline that generates samples by simple selecting a random image from the training set. We know that, by construction, such a ``trivial'' generative model will produce perfectly realistic samples, but importantly, those samples will not generalize beyond the training set. We demonstrate that such a generative model, while quite straightforward, presents a challenging baseline according to many evaluation measures.

\subsection{Results} \label{sec:result}
\begin{figure}
  \begin{centering}
      \subfloat[Non-Parametric]{%
      \begin{minipage}{0.4\linewidth}
      \centering
      \includegraphics[width=0.98\linewidth]{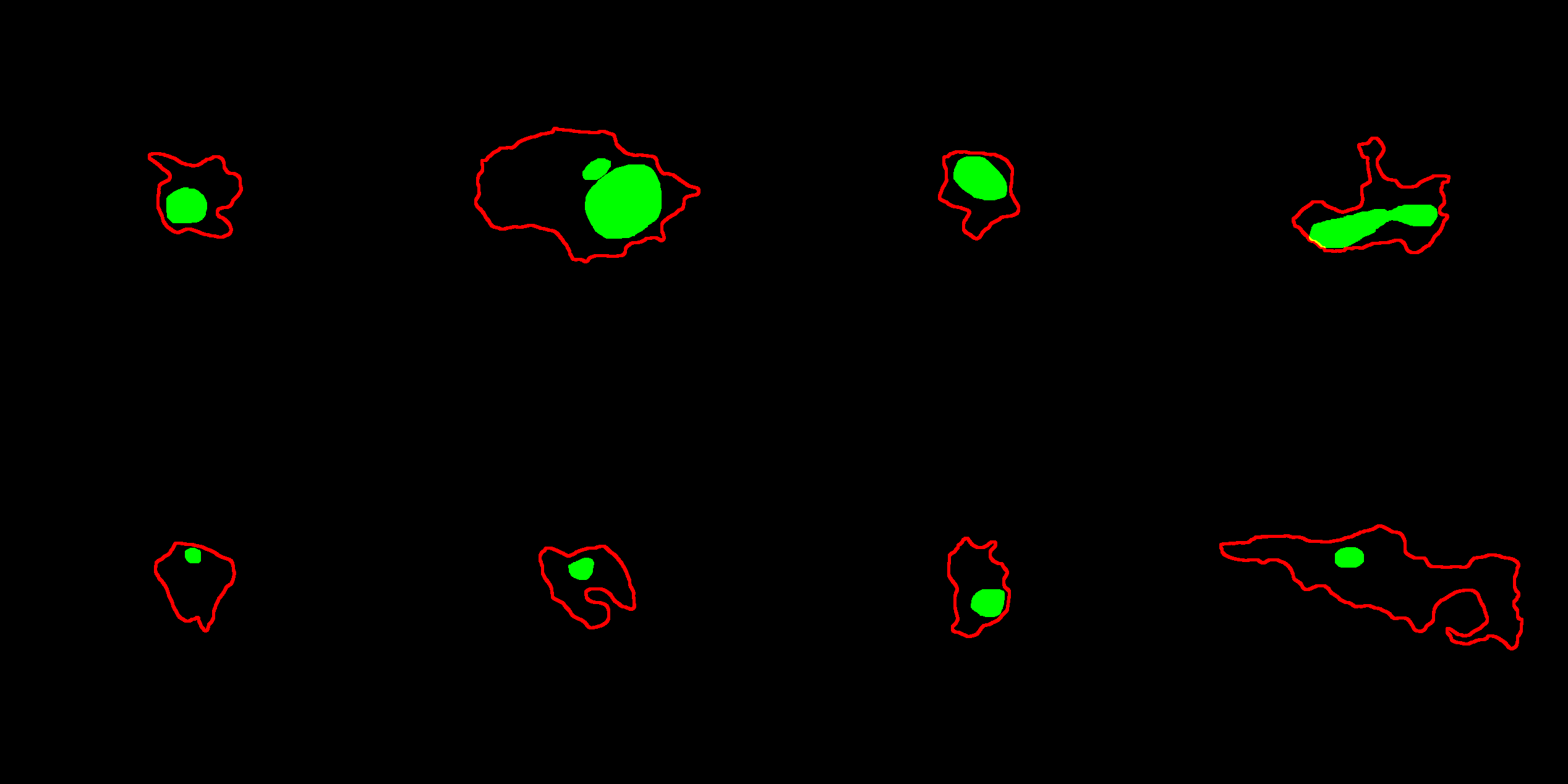}
      \end{minipage}%
      }
      \subfloat[Parametric]{%
      \begin{minipage}{0.4\linewidth}
      \centering
      \includegraphics[width=0.98\linewidth]{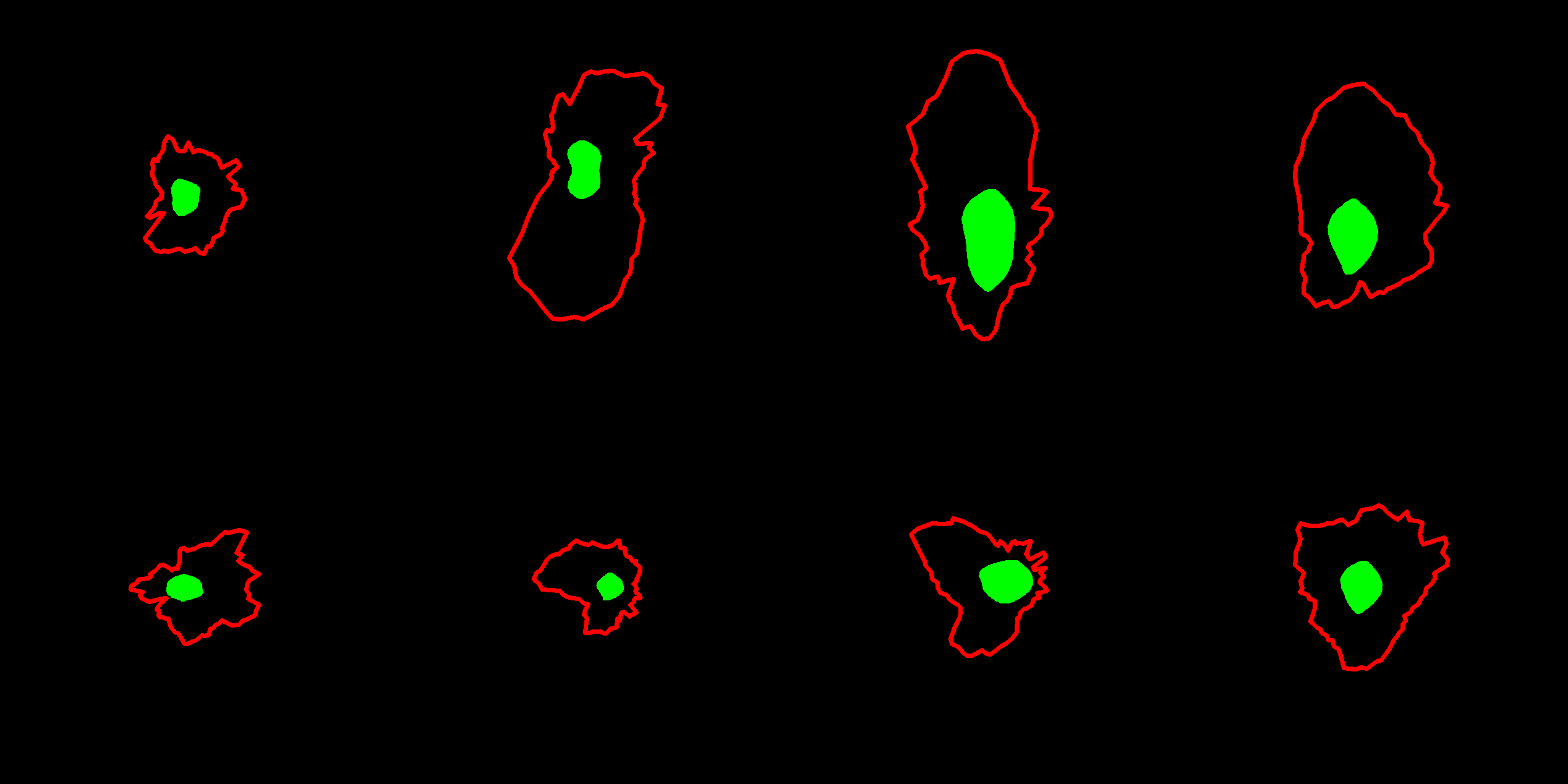}
      \end{minipage}%
      }\par
      \subfloat[SGAN]{%
      \begin{minipage}{0.4\linewidth}
      \centering
      \includegraphics[width=0.98\linewidth]{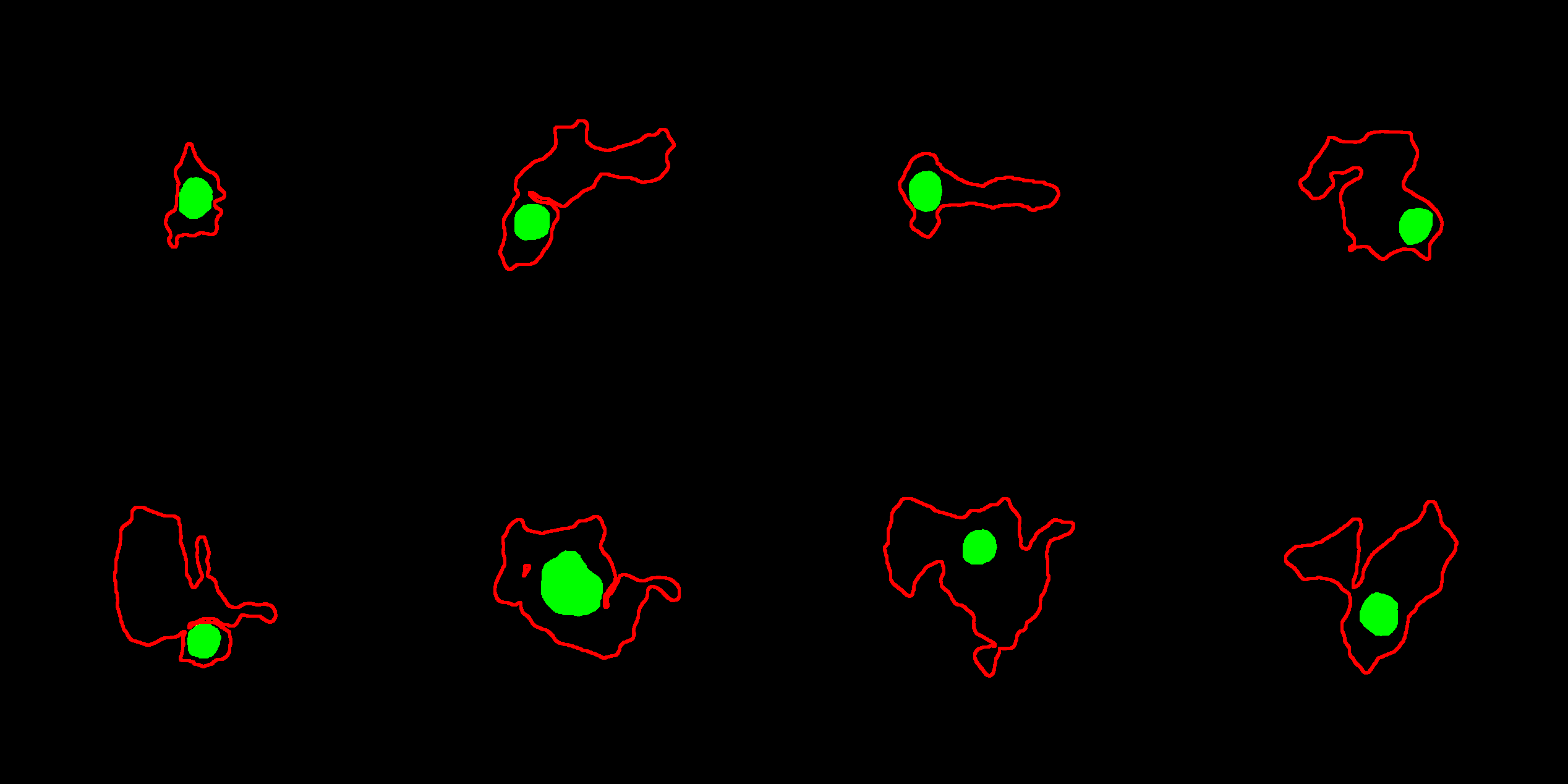}
      \end{minipage}%
      }
      \subfloat[DSGAN]{%
      \begin{minipage}{0.4\linewidth}
      \centering
      \includegraphics[width=0.98\linewidth]{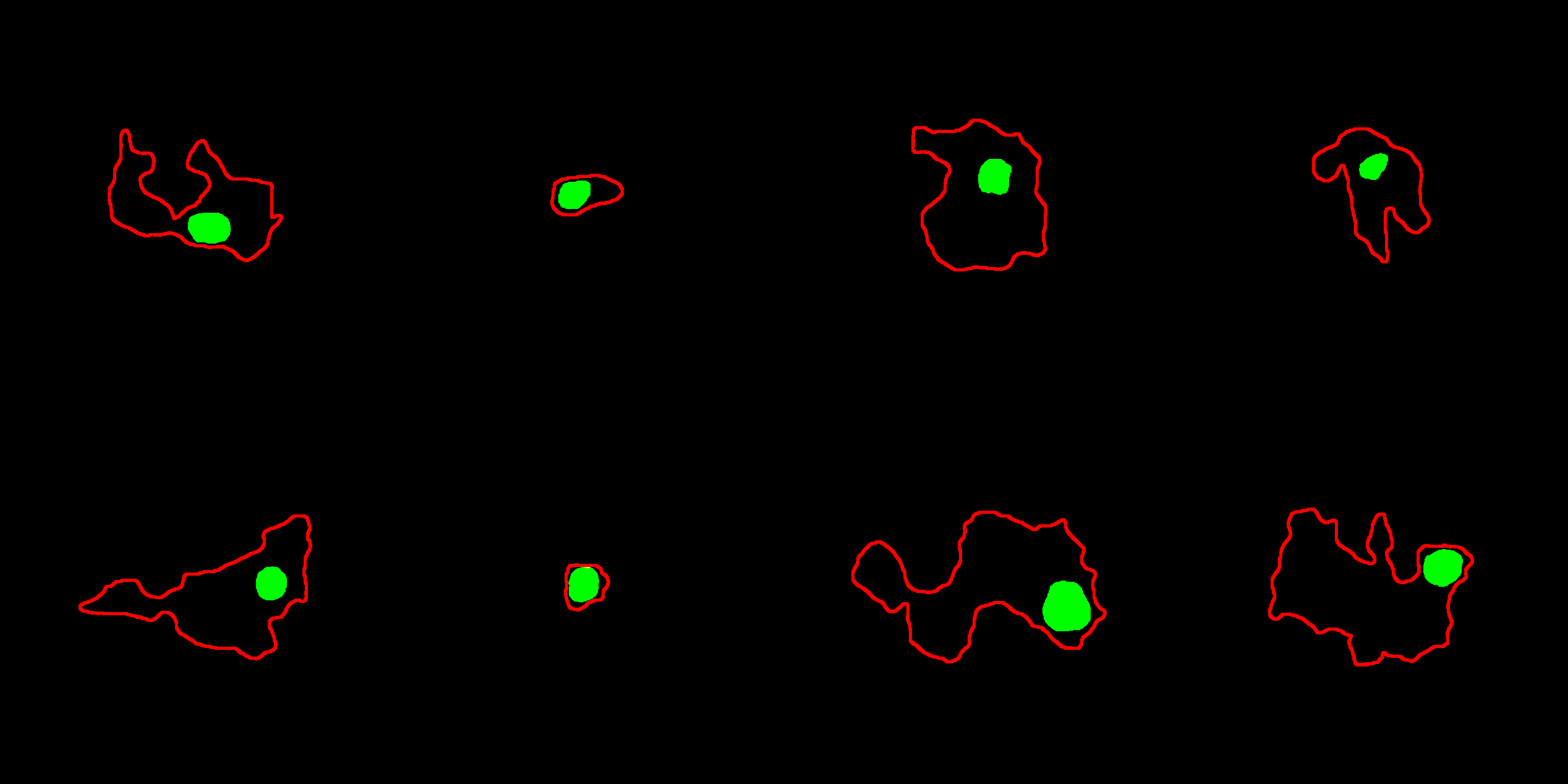}
      \end{minipage}%
      }\par
      \caption{Samples of single cell labels, from on which biological markers are extracted. More samples for other baselines are given in Supplementary 5. Quantitative results can be found in Fig.~\ref{fig:acc} (a) (b).}
      \label{fig:single}
  \end{centering}
\end{figure}
\begin{figure}
  \begin{center}
    \subfloat[Comparison of supervision strategies]{\includegraphics[scale=0.45, valign=t]{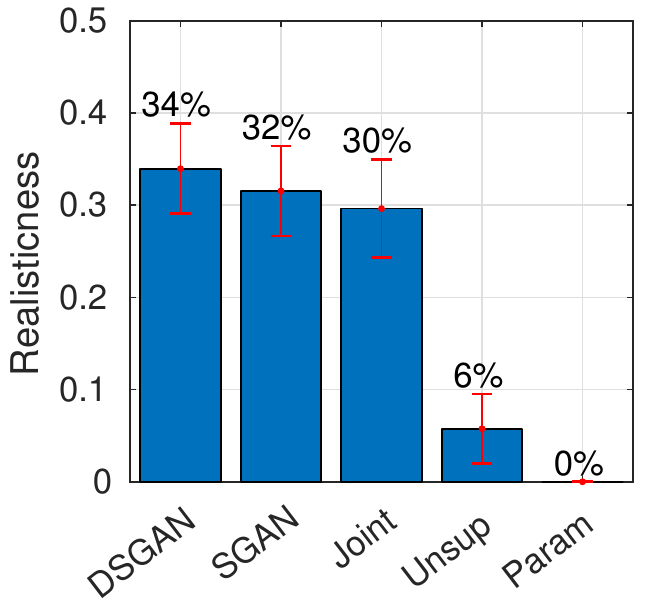}}
    \hspace{10pt}
    \subfloat[Comparison of network architectures]{\includegraphics[scale=0.45, valign=t]{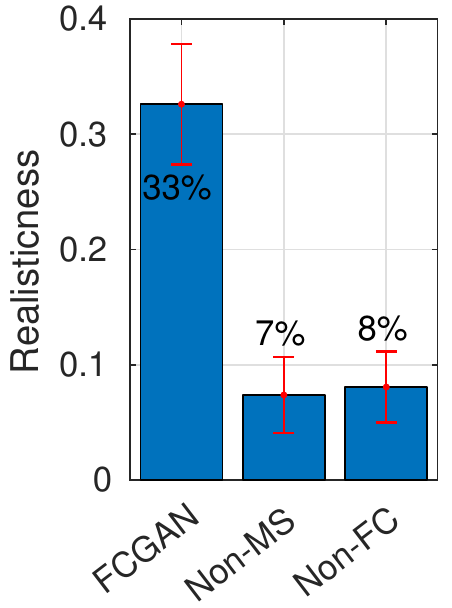}}
    \caption{Evaluating realism with image classifiers. SVM classifiers are trained to distinguish real and synthetic single-cell labels based on shape features as described in section~\ref{sec:metric_baseline}. Bar plot shows percentage of synthetic labels being classified as real ones, higher is better.}
    \label{fig:acc}
  \end{center}
\end{figure}
\begin{figure}
  \begin{center}
    \subfloat[User study on synthetic single-cell labels]{\includegraphics[scale=0.45, valign=t]{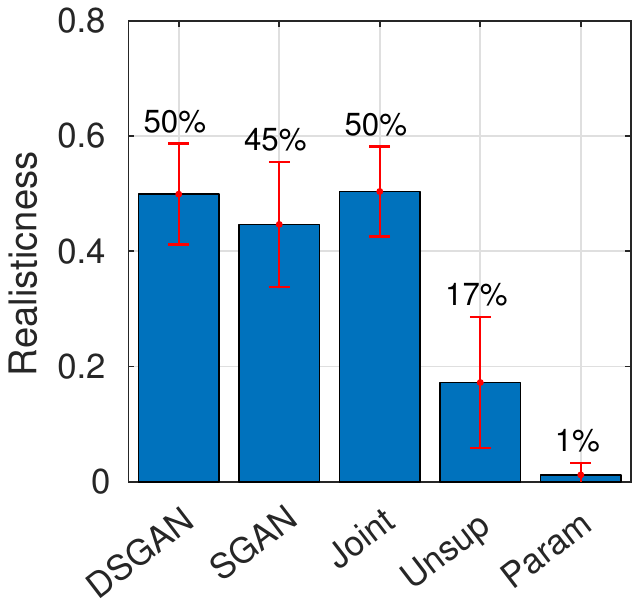}}
    \hspace{10pt}
    \subfloat[User study on synthetic images]{\includegraphics[scale=0.45, valign=t]{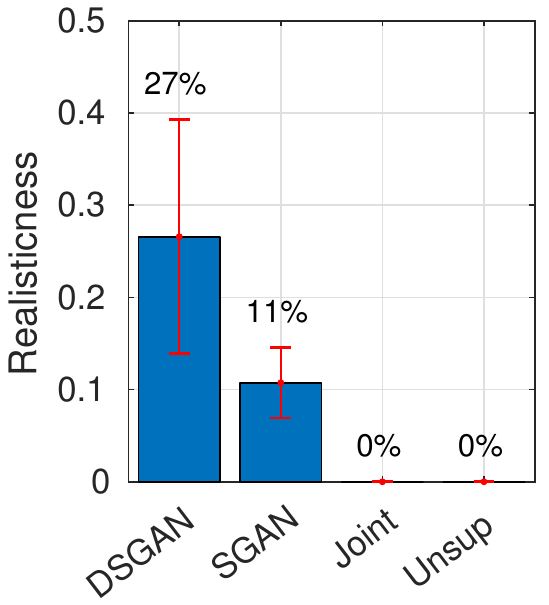}}
    \caption{Evaluating realism with user studies. Bar plot shows percentage of synthetic images being classified by users as real ones, higher is better. (a) Users are shown mixtures of real and synthetic labels of single cells. (b) Users are shown mixtures of real and synthetic full images.}
    \label{fig:user}
  \end{center}
\end{figure}
\begin{figure}
  \begin{center}
    \subfloat[Comparison of supervision strategies]{\includegraphics[scale=0.3]{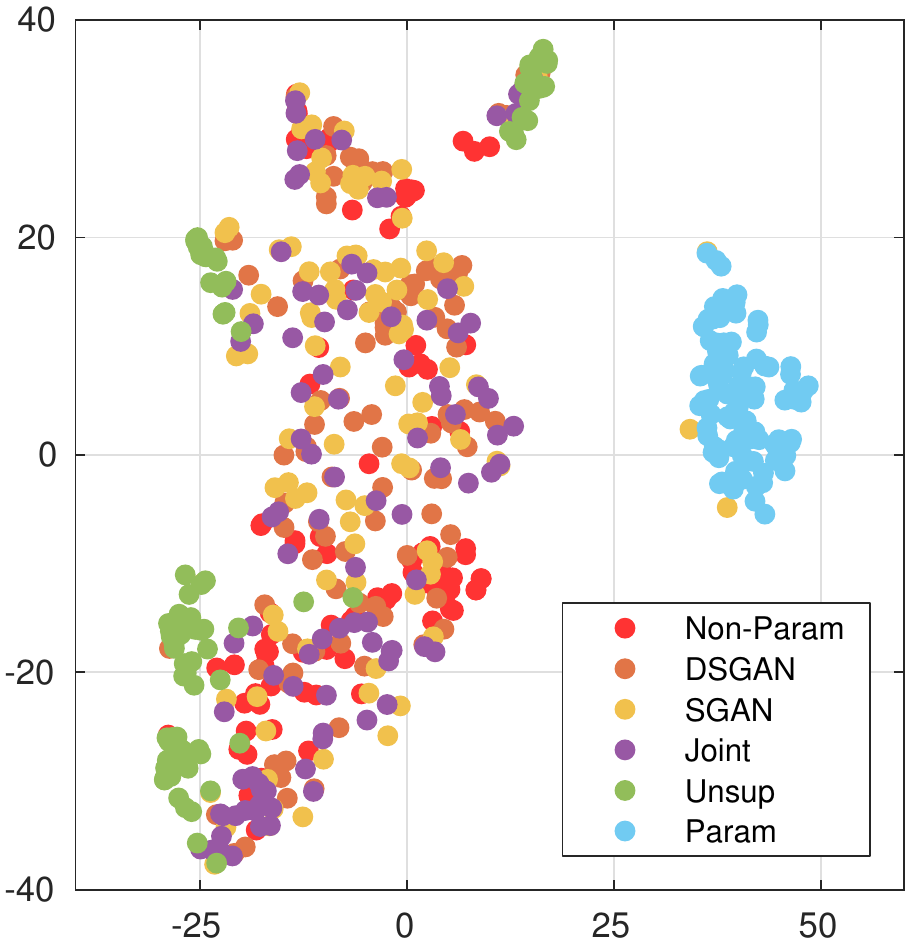}}
    \hspace{10pt}
    \subfloat[Comparison of network architectures]{\includegraphics[scale=0.3]{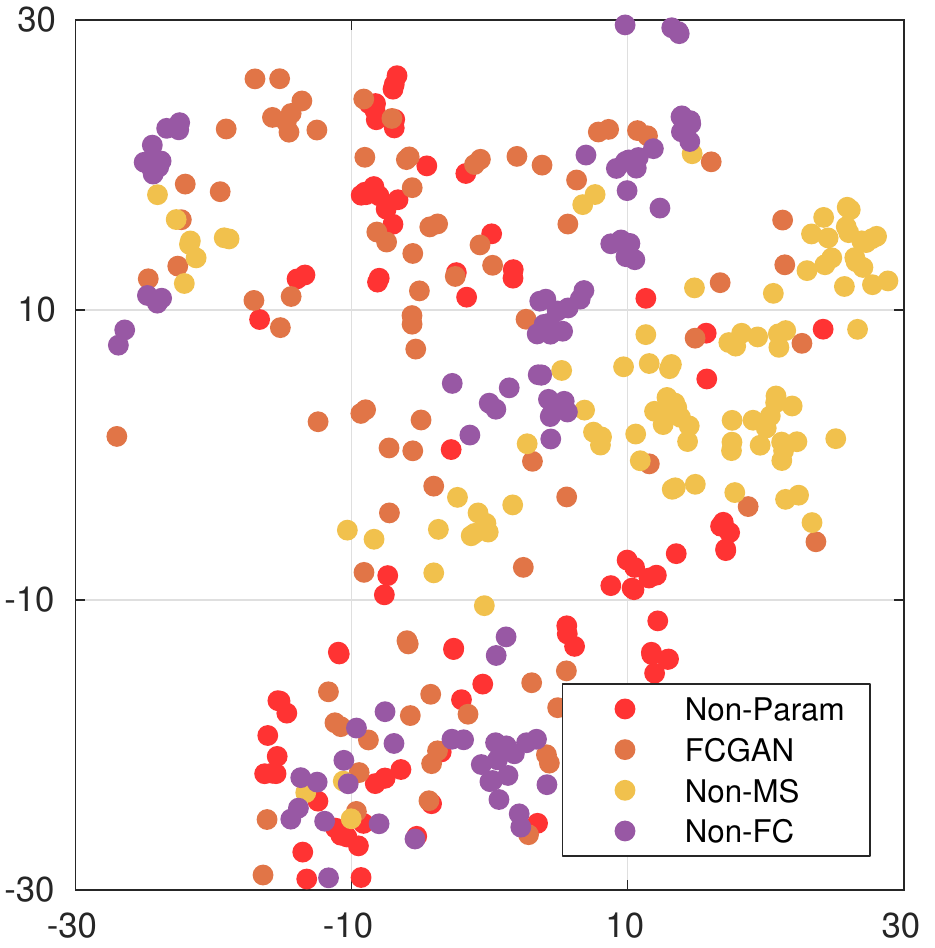}}
    \caption{2-D t-SNE~\cite{maaten2008visualizing} visualization of the 89-dimensional shape features. (a) Features of DSGAN, SGAN and JointGAN well overlap with real ones (Non-Parametric baseline), while features of UnsupervisedGAN or parametric baseline are easily separable. (b) Non-FC and Non-MS only covers parts of the real (projected) feature distribution.}
    \label{fig:tsne}
  \end{center}
\end{figure}
\begin{table*}
  \centering
  \resizebox{0.65\linewidth}{!}{
  \begin{tabular}{lcccc|ccc}
  \hline
  & \begin{tabular}{@{}c@{}} avg cell size \\ (\SI{}{\micro\meter}) \end{tabular} & \begin{tabular}{@{}c@{}} avg mito size \\ (\SI{}{\micro\meter}) \end{tabular} & \begin{tabular}{@{}c@{}} avg mito \\ roundness \end{tabular} & \begin{tabular}{@{}c@{}} avg \#of \\ mito per cell \end{tabular}  & $\chi^2$ cell size & $\chi^2$ mito size & \begin{tabular}{@{}c@{}} $\chi^2$ mito \\ roundness \end{tabular} \\
  \hline \hline
  Training   & 0.286 & 0.291 & 0.843 & 0.067 & 0     & 0     & 0     \\
  Non-Param  & \textbf{0.283} & 0.270 & 0.797 & 0.074 & 0.054 & \textbf{0.094} & 0.103 \\
  \hline
  DSGAN   & 0.310 & 0.272 & 0.819 & \textbf{0.068} & \textbf{0.050} & 0.130 & 0.109 \\
  SGAN    & 0.311 & \textbf{0.288} & \textbf{0.839} & 0.073 & 0.057 & 0.112 & 0.122 \\
  \hline
  Joint   & 0.272 & 0.275 & 0.779 & 0.060 & 0.058 & 0.129 & \textbf{0.095} \\
  Unsup   & 0.208 & 0.231 & 0.646 & 0.017 & 0.158 & 0.424 & 0.270 \\
  \hline
  \end{tabular}}
  \caption{Global statistics of multiple cells. We report the numbers of average cell size, average mitochondria size, average mitochondria roundness, average number of mitochondria per cell, and chi-squared distances of their distributions between ground-truth and synthetic data. Numbers suggest SGAN/DSGAN captures global statistics of cellular structures.}
  \label{tab:stat}
\end{table*}
First, we present evaluation results on labels. For shape classifiers, example single cell labels are shown in Fig.~\ref{fig:single} (more in Supplementary 5). From Fig.~\ref{fig:acc}-a, we conclude that SGAN/DSGAN and JointGAN outperform the parametric and unsupervised baselines, which is confirmed by user studies shown in Fig.~\ref{fig:user}-a. A qualitative visualization of the shape features is shown in Fig~\ref{fig:tsne}-a. Moreover, SGAN and DSGAN can recover the global statistics of cells as well (Table~\ref{tab:stat}). Not surprisingly, shape features and global statistics extracted from the trivial non-parametric baseline model look perfect, however, the total number of different images that can be generated (which is also referred to as the support size of the generated distribution) is largely confined by the size of the dataset.

\noindent {\bf Support size:} To address this limitation, we estimate the support size of the generated distribution induced by our model by computing the number of samples that need to be generated before encountering duplicates (the ``Birthday Paradox'' test, as proposed in past work~\cite{arora2017gans}). Our model is able to produce much more diverse samples (please see Supplement 4 for details).





\noindent {\bf Network ablation study:} As discussed, our proposed methods can produce accurate labels, which is achieved by two architectural modifications: (1) fully-convolutional generation, and (2) multi-scale discrimination. To verify their effectiveness, we conduct ablation studies, and particularly, compare FCGAN with three baselines: {\em Non-FC}, where $G_y$ is not fully-convolutional; {\em Non-MS}, where $D_y$ only contains a single discriminator; vanilla {\em DCGAN}, whose results are not shown because of poor qualities (cannot extract single cells from synthesized labels).  Quantitatively, Fig.~\ref{fig:acc}-b illustrates that FCGAN outperforms baselines by a large margin. A t-SNE visualization is shown in Fig.~\ref{fig:tsne}-b. Qualitatively, as shown in Fig.~\ref{fig:fcgan_ablation}, Non-FC, Non-MS and DCGAN all suffer from mode collapse.

\noindent {\bf Segmentation accuracy:} Following past work, we also evaluate realism of an image by the accuracy of an off-the-shelf segmentation network. We report mean IU and negative log likelihood (or NLL). Particularly, for SGAN and DSGAN, we take their image generators and use them to render images from a fixed set of pre-generated synthetic labels, which is used as ``ground-truth'' for evaluating segmentation accuracy. The reason is that eventually at test time, we follow the same process of rendering synthetic images from generated labels. As shown in Table~\ref{tab:fcnscore}, DSGAN has better segmentation accuracy than SGAN and JointGAN, which is confirmed by user studies in Fig.~\ref{fig:user}-b.
\begin{figure}
  \centering
  \includegraphics[width=0.9\linewidth]{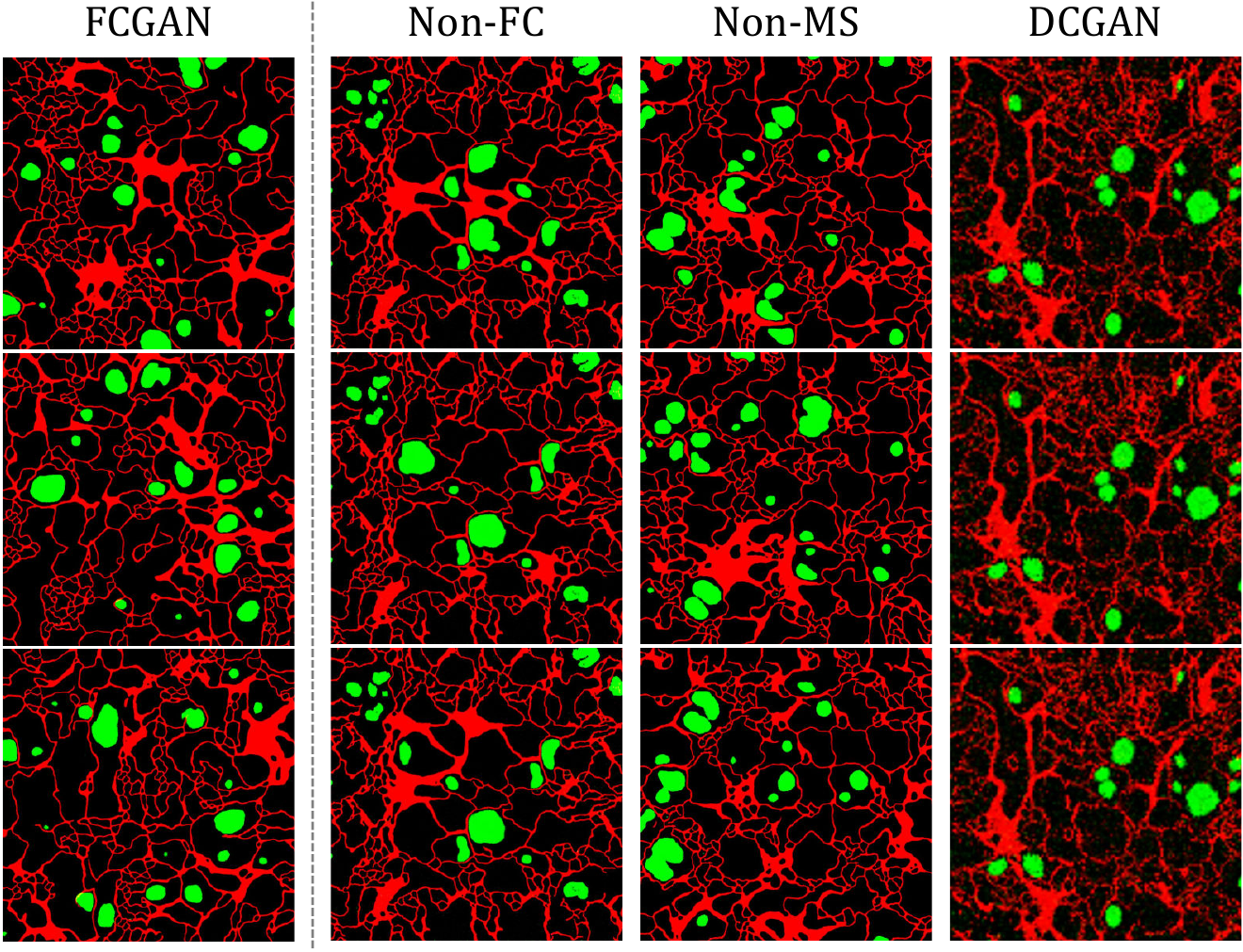}
  \caption{Label synthesis, raw output without label editing. Non-FC, Non-MS and DCGAN all suffer from mode collapse: Non-FC, patterns at four sides are the same across samples, inner patterns are also repetitive; Non-MS, repetitive patterns show at different locations; DCGAN, samples are blurry and almost identical.}
  \label{fig:fcgan_ablation}
\end{figure}
\begin{table}
  \begin{center}
    \resizebox{0.6\linewidth}{!}{
    \begin{tabular}{lcc}
    \hline
    Dataset      & mean IU & NLL               \\
    \hline \hline
    Non-Param      & 88.3\%  & 0.112 $\pm$ 0.006 \\
    \hline
    DSGAN        & 89.3\%  & 0.108 $\pm$ 0.006 \\
    SGAN         & 87.2\%  & 0.132 $\pm$ 0.006 \\
    \hline
    Joint        & 81.8\%  & 0.177 $\pm$ 0.013 \\
    \hline
    \end{tabular}}
  \end{center}
  \caption{Segmentation accuracies for SGAN/DSGAN and baselines. The mean IU and NLL of SGAN/DSGAN both match those of real cell images. Non-FC and Non-MS have high segmentation accuracy due to mode collapse. UnsupervisedGAN is not shown because it does not provide ``ground-truth'' label automatically}
  \label{tab:fcnscore}
\end{table}
\begin{figure}
  \centering
  \includegraphics[scale=0.45]{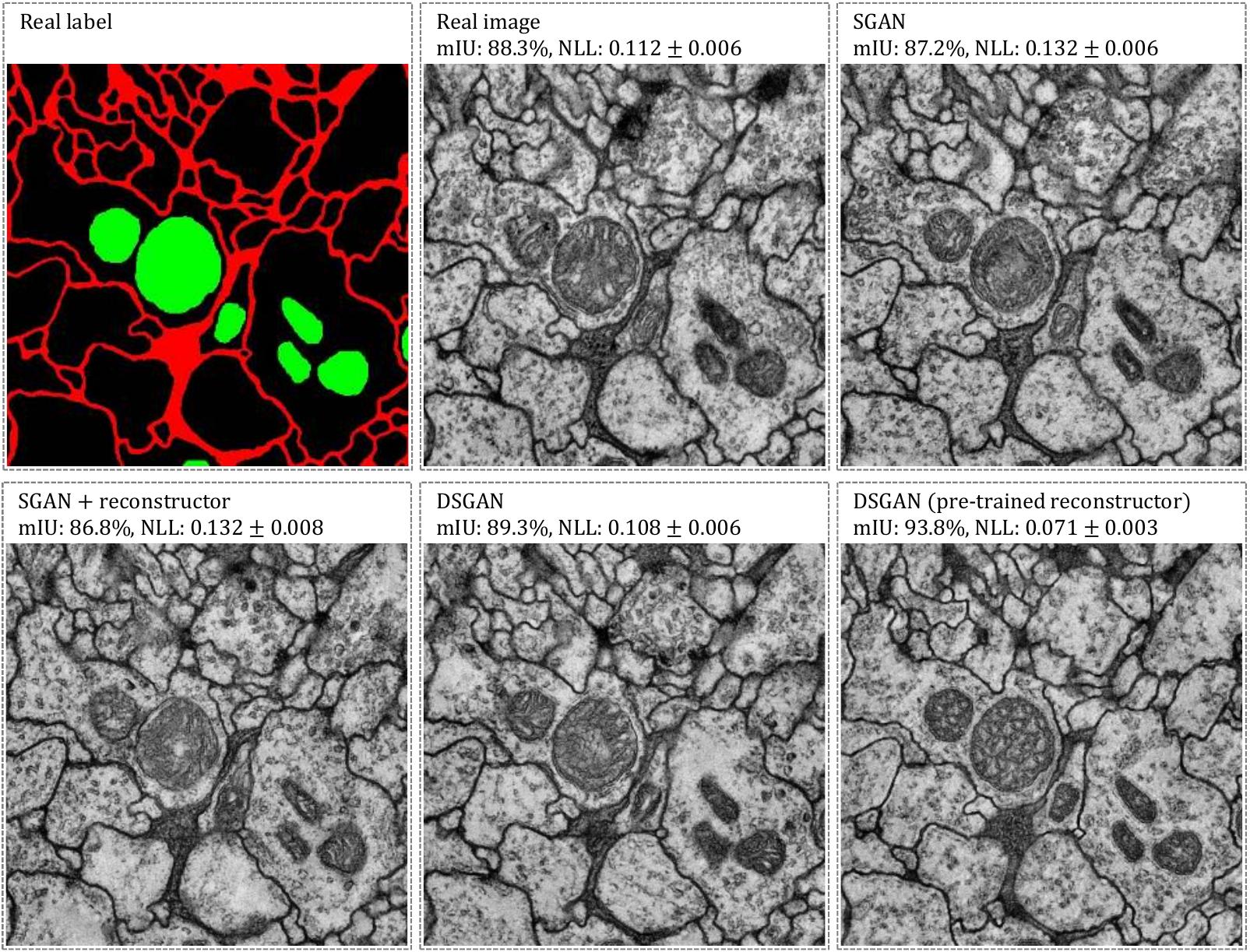}
  \caption{Synthetic image samples and segmentation accuracies of different training approaches. We take $G_x$'s and evaluate segmentation accuracies on a same set of generated labels. $G_x$ of DSGAN yields higher segmentation accuracy but does not show obvious advantage visually. DSGAN with a pre-trained reconstructor achieves the highest score but not in terms of visual inspection.}
  \label{fig:dsgan_sample}
\end{figure}

\noindent {\bf SGAN v.s. DSGAN:} Perhaps it is not surprising that DSGAN performs better than SGAN, since it makes use of an additional reconstruction loss that ensures that generated images will produce segmentation labels that match (or reconstruct) those used to produce the generated images. In theory, one could add such a reconstruction loss to SGAN. However, Fig.~\ref{fig:dsgan_sample} shows that $\text{SGAN}+\text{reconstructor}$ actually has a lower mean IU ($86.8\%$) than vanilla $\text{SGAN}$ ($87.2\%$). Interestingly, because SGAN explicitly factors synthesis into two distinct stages, one can evaluate the second stage module $p(x|y)$ using synthetic labels $\hat{y}$. Under such an evaluation, a reconstruction loss helps ($88.3\%$). In fact, we found one could ``game'' the segmentation metric by using a pre-trained reconstructor, producing a mean IU of $93.8\%$. We found these generated images to be less visually-pleasing, suggesting that the generator tends to overfits to some common patterns recognized by the reconstructor.



\section{Discussion}
In this work, we explore methods towards supervised GAN training, where the generative process is factorized and guided by structural labels. New modifications for both generators and discriminators are also proposed to alleviate mode collapse and allow fully-convolutional generation. Finally, we demonstrate by extensive evaluation that our supervised GANs can synthesize considerably more accurate images than unsupervised baselines.


\noindent {\bf Acknowledgments:} This work was supported in part by National Institutes of Health grant GM103712.  We thank Peiyun Hu and Yang Zou for their helpful comments. We would like to specially thank Chaoyang Wang for insightful discussions.

\noindent {\bf Reproducible Research Archive:} All source code and data used in these studies is available at \url{https://github.com/phymhan/supervised-gan}.
{\small
\bibliographystyle{ieee}
\bibliography{refs}
}

\begin{appendix}
\noindent {\large \bf Supplementary}\par
\vspace{7pt}
In Supplementary, we first give a proof of Theorem~\ref{thm:optimality}. Then we discuss architectural designs for conditional generators. An analysis of the failure when training DSGANs with conditional GAN is then discussed. Next, we give details for evaluating support sizes. Finally, we show additional samples of single-cell labels and label-image pairs.

\section{Proof of Theorem~\ref{thm:optimality}} \label{sup:thm}
\subsection{Optimality Condition for SGAN}
\begin{proof}
{\bf GAN $(G_y, D_y)$ \quad} Denote true data probability density function by $p$ and that induced by generator $G_y$ by $q$. The proof is same as the proof for the optimal discriminator in~\cite{goodfellow2014generative}. Rewrite the value function $V_y$ as an integral form,
\begin{align}
V_y(G_y,D_y) = & \mathbb{E}_{y\sim p_{y}}[\log(D_y(y))] + \nonumber\\
     & \mathbb{E}_{z\sim p_{z}}[\log(1-D_y(G_y(z)))] \nonumber\\
    = & \int p(y) \log(D_y(y)) + \nonumber\\
     & q(y) \log(1-D_y(y)) dy \label{eq:proof_sgan_y}.
\end{align}
\noindent We get the optimal discriminator $D^*_y$ by applying the Euler-Lagrange equation,
\begin{align}
D^*_y(y) = \frac{p(y)}{p(y)+q(y)}.
\end{align}
\noindent Plug $D^*_y$ in $V_y$ yields,
\begin{align}
C_y(G_y) &= V_y(G_y, D^*_y) \nonumber\\
  &= -2\log(2) + 2\cdot JSD(p(y)||q(y)),
\end{align}
\noindent where $JSD$ is the Jensen-Shannon divergence. Since $JSD \geq 0$ and reaches its minimum if and only if $p(y)=q(y)$, $q(y)$ recovers the true label distribution $p(y)$ when $G_y$ and $D_y$ are trained optimally.

\noindent {\bf cGAN $(G_x, D_x)$ \quad} Similarly, rewrite $V_x$ as an integral form,
\begin{align}
V_x(G_x,D_x) = & \mathbb{E}_{x,y\sim p_{x,y}}[\log(D_x(x,y))] + \nonumber\\
    & \mathbb{E}_{x,y\sim p_{x,y}}[\log(1-D_x(G_x(y),y))] \nonumber\\
    = & \iint p(x, y) \log(D_x(x, y)) + \nonumber\\
     & p(y) q(x|y) \log(1-D_x(x, y)) dx dy \label{eq:proof_sgan_x}.
\end{align}
\noindent We get the optimal discriminator $D^*_x = \frac{p(x|y)}{p(x|y)+q(x|y)}$. Minimizing $V_x$ w.r.t. $G_x$ is minimizing the expectation of Jensen-Shannon divergence between real and fake conditional distributions,
\begin{align}
C_x(G_x) &= V_x(G_x, D^*_x) \\
  &= -2\log(2) + 2 \int p(y) JSD(p(x|y)||q(x|y)) dy. \nonumber
\end{align}
\noindent When trained optimally, $q(x|y)=p(x|y)$, and $y \in \{y: p(y)>0\}$.

Training a SGAN with cGAN is equivalent to training a GAN for label $y$ and training a cGAN for image $x$, thus when $D$ and $G$ reach their optimality, the probability density functions induced by generators recover the true joint distribution $q(y) q(x|y) = p(x, y)$.
\end{proof}

\subsection{Optimality Condition for DSGAN}
\begin{proof}
{\bf DSGAN in equation~\ref{eq:dsgan_cgan}\quad} Apply the same tricks to $V_x$,
\begin{align}
V_x(G_x,F_y,D_x) = & \iint p(x, y) \log(D_x(x, y)) + \label{eq:proof_dsgan}\\
    & q(y) q(x|y) \log(1-D_x(x, y)) dx dy \nonumber
\end{align}
\noindent and the optimal $D^*_x = \frac{p(y)p(x|y)}{p(y)p(x|y) + q(y)q(x|y)}$. Then we have
\begin{align}
C_x(G_x) &= V_x(G_x, D^*_x) \\
  &= -2\log(2) + 2\cdot JSD(p(y)p(x|y)||q(y)q(x|y)), \nonumber
\end{align}
\noindent and when trained optimally, $q(y)q(x|y) = p(y)p(x|y)$. Given $q(y) = p(y)$, we have $q(x|y) = p(x|y)$.

\noindent {\bf DSGAN in equation~\ref{eq:dsgan_cgan_cycle}\quad} When $F_y$ is trained optimally, i.e. $F^*_y$ perfectly reconstructs true label $y$ and synthetic label $G_y(z)$, the cross-entropy terms (cycle losses) in equation~\ref{eq:dsgan_cgan_cycle} are zero. Then follow the same proof for optimality condition of cGAN,
\begin{align}
C_x(G_x) &= V_x(G_x, F^*_y, D^*_x) \\
  &= -2\log(2) + 2 \int p(y) JSD(p(x|y)||q(x|y)) dy, \nonumber
\end{align}
\noindent we have $q(x|y) = p(x|y)$.
\end{proof}

\section{Conditional Synthesis} \label{sup:crn}
\begin{figure}
\begin{center}
    \includegraphics[scale=0.55]{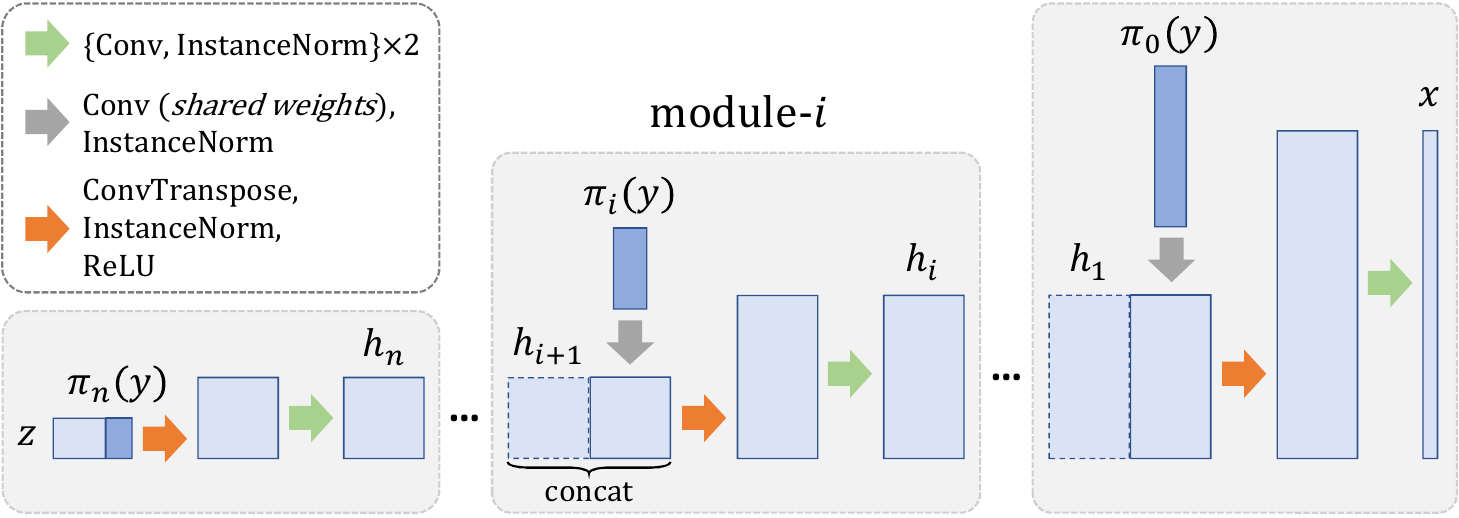}
\end{center}
\caption{Cascaded refinement network for conditional label synthesis. The noise image and down-sampled label go through several similar modules. The modules sequentially refine these intermediate feature $h_i$'s. Down-sampled labels go through a conv layer of which the weights are shared cross all modules (this is to reduce number of parameters as this conv layer only serves the purpose of enlarging dimension of labels to match the dimension of $h_i$'s to prevent the network from ignoring the labels).}
\label{fig:crn}
\end{figure}
\begin{figure}
\begin{center}
    \includegraphics[scale=0.45]{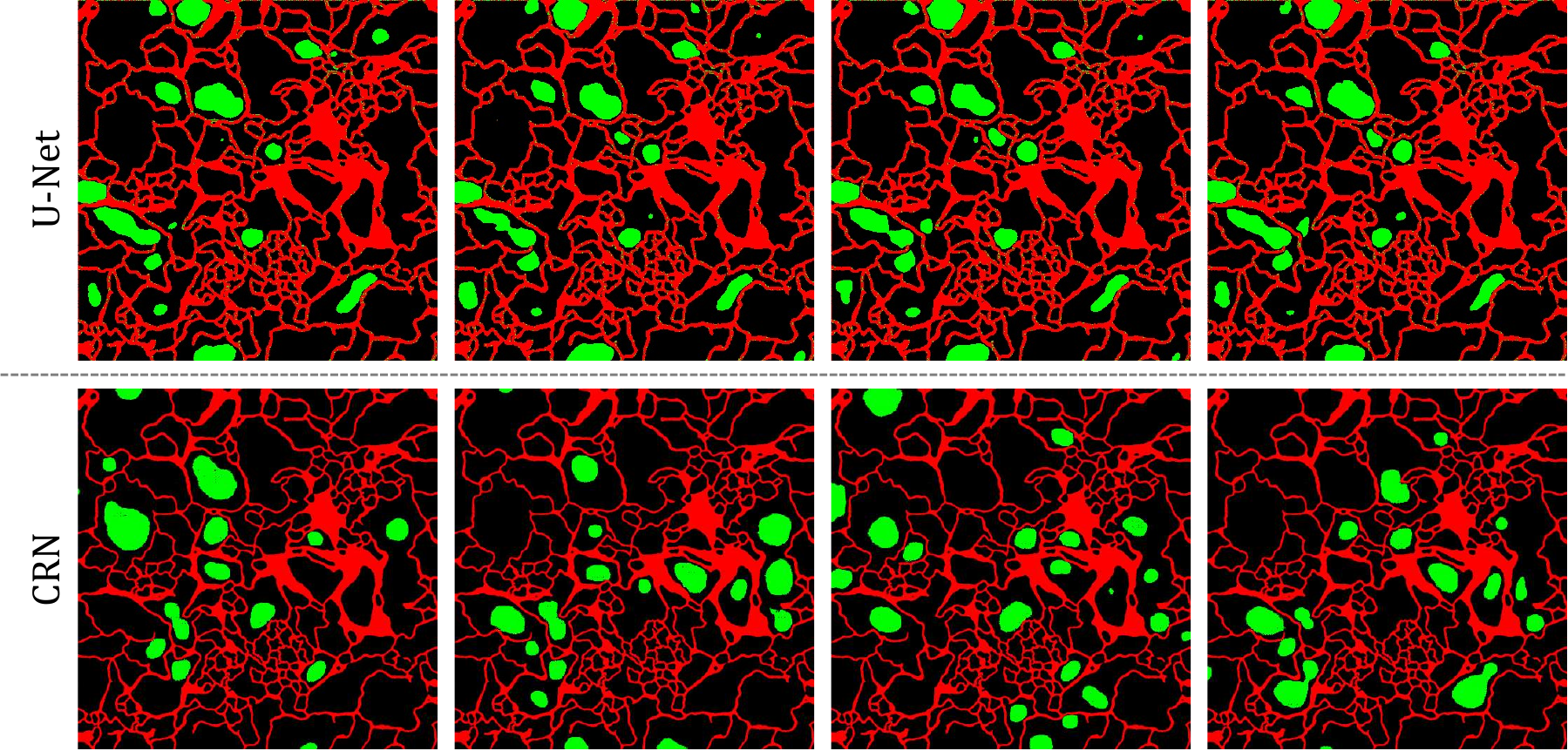}
\end{center}
\caption{Samples for conditional label synthesis. The generators generate different mitochondria (green) given the same membrane layout (red). When using U-Net as generator, the outputs are almost identical. While CRN as generator can generate diverse mitochondria layouts conditioned on the same membrane label.}
\label{fig:crn_mode}
\end{figure}
The architecture of our conditional generator is given in Fig.~\ref{fig:crn}. As discussed in main text, under some cases such as conditional label synthesis, CRN alleviates mode collapse as compared to using U-Net~\cite{ronneberger2015u} as generator. Examples in Fig.~\ref{fig:crn_mode} show that when training a cGAN synthesizing mitochondria given membrane layouts, CRN gives much more diverse outputs than U-Net.
\section{SGAN v.s. DSGAN} \label{sup:dsgan}
Here we analyze the importance of introducing a reconstructor in DSGAN by a case study. As mentioned before, training $G_x$ on fake label-image pairs using cGAN, i.e. training $D_x$ to discriminate $(G_y(z), G_x(G_y(z)))$ pairs from $(y, x)$ pairs ($x$ and $y$ are real samples), yields poor results. We hypothesize that the discriminator $D_x$ will then focus on the differences between real and synthetic labels rather than correlations between label and images. To verify, as gradients give clue to what neural nets are looking at~\cite{cao2015look}, we visualize the gradient of {\em True} node with respect to image after $D_x$ finishing its inference. We compare the magnitudes of gradients in each channels (membrane $y_1$, mitochondria $y_2$ and image $x$). As shown in Fig.~\ref{fig:dsgan_grad} (b), $D_x$'s attention is focused on membrane labels rather than images. Thus, for DSGAN, we introduce a reconstructor.
\section{Support Size} \label{sup:birthday}
\begin{figure}
  \begin{centering}
      \subfloat[Minimum pairwise distance]{%
      \begin{minipage}{1\linewidth}
      \centering
      \includegraphics[width=0.625\linewidth]{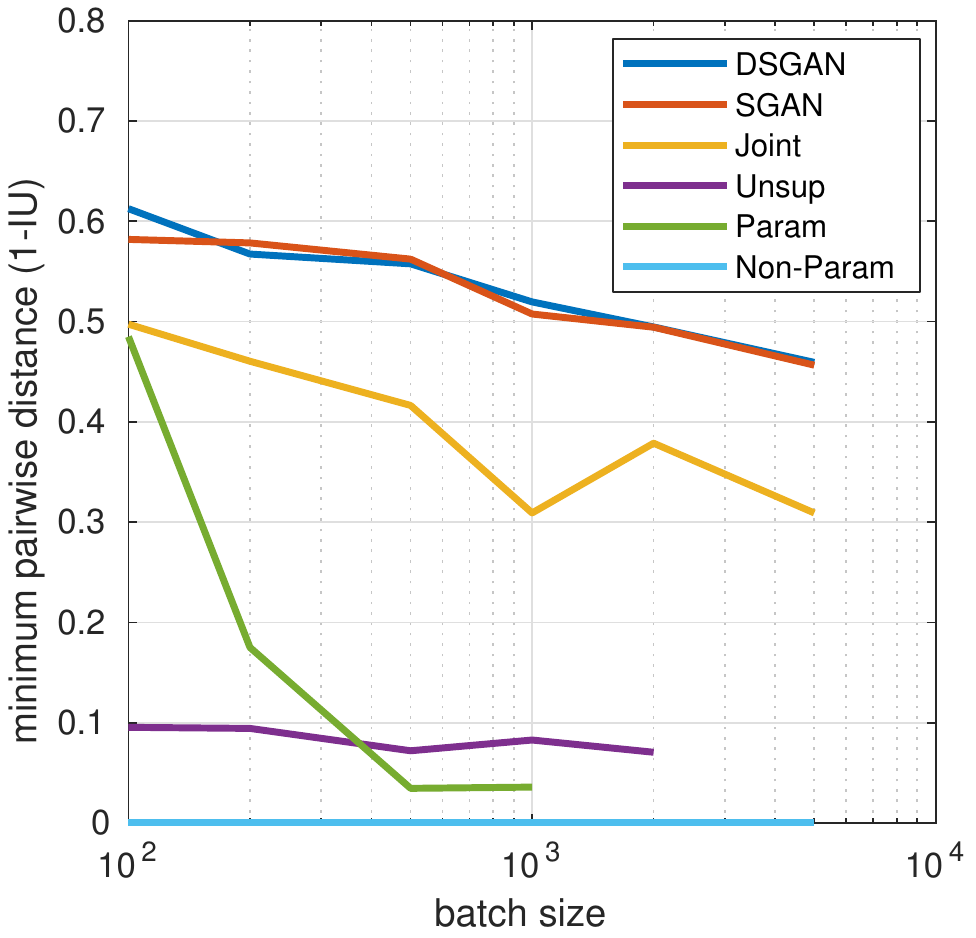}
      \end{minipage}%
      }\par
      \subfloat[Most similar pairs]{%
      \begin{minipage}{1\linewidth}
      \centering
      \includegraphics[width=0.85\linewidth]{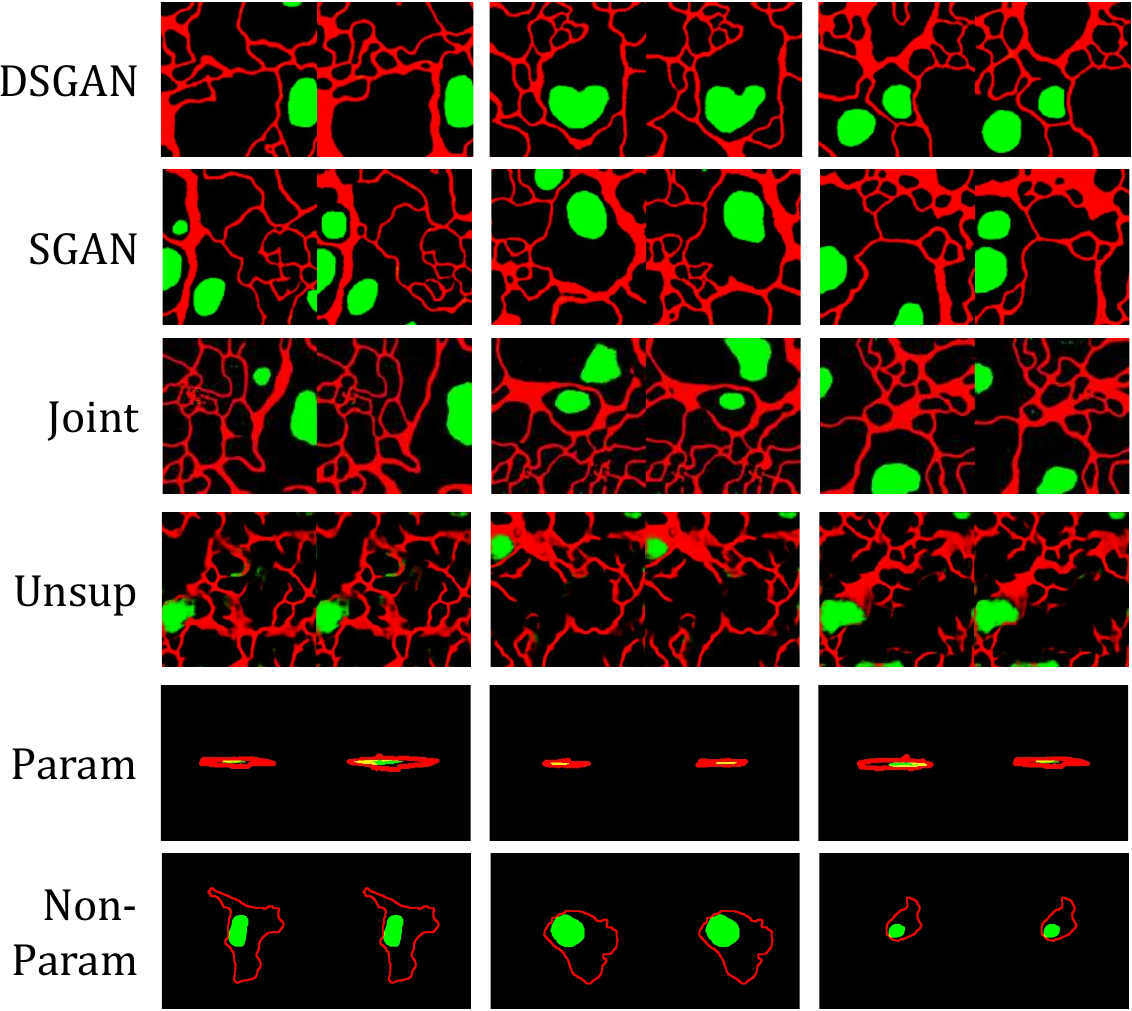}
      \end{minipage}%
      }
      \caption{(a) Plot of minimum pairwise distances over different batch sizes. The distance is defined as $1-\text{IU of the binarized labels}$. (b) Most similar pairs found in the largest batch.}
      \label{fig:birthday}
  \end{centering}
\end{figure}
To evaluate the support size of the generated distribution, we use a technique proposed in~\cite{arora2017gans}. Basically, if we find duplicates in a batch of generated samples of size $s$, the support size is approximately $s^2$. For SGAN, DSGAN and JointGAN, we test samples from a label generator with noise ``images'' of spatial size $2 \times 2$ (an ``atom'' noise image, the output labels are of size $128 \times 128$). The reason is that by enlarging the size of the input noise, the outputs will be the Cartesian product of each ``atom'' output set. The same logic applies to having both membranes and mitochondria. For parametric and non-parametric baseline, we test samples of single cell labels. Particularly, we pick a batch of size $s$ and measure image similarity using the intersection over union (IU) of the binarized labels after a 2-time downsampling (IU greater than $0.9$ is considered as similar). We increase batch-size $s$ from $100$ to $5000$, and run $10$ times for each value of $s$. The visualization of minimum pairwise distances and most similar pairs are shown in Fig.~\ref{fig:birthday}. 

\begin{figure}[h]
\subfloat[SGAN with cGAN]{%
    \begin{minipage}{\linewidth}
    \centering
    \includegraphics[width=\linewidth, valign=b]{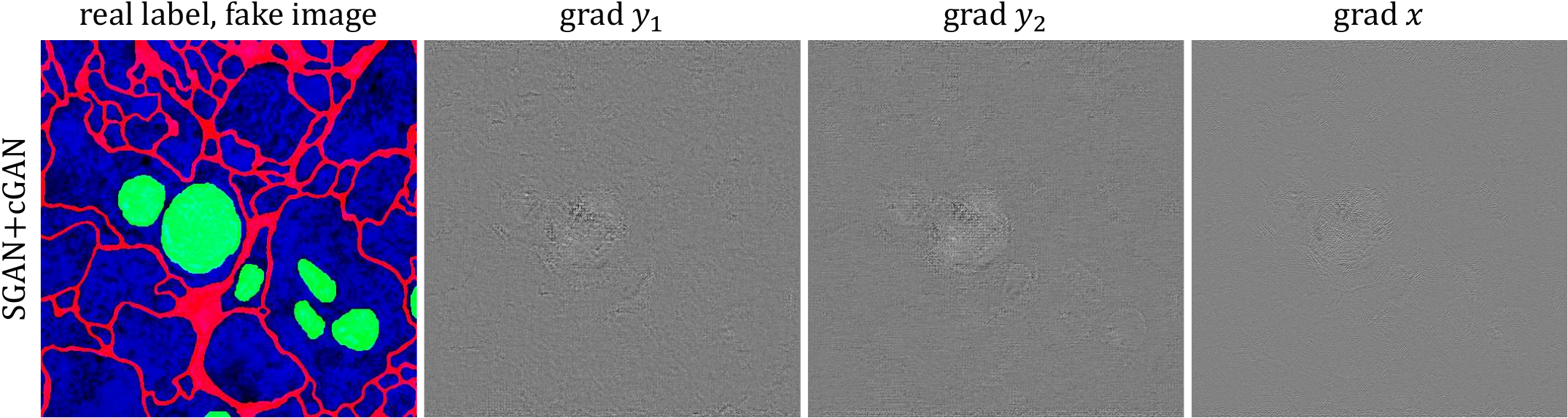}
    \includegraphics[scale=0.4, valign=b]{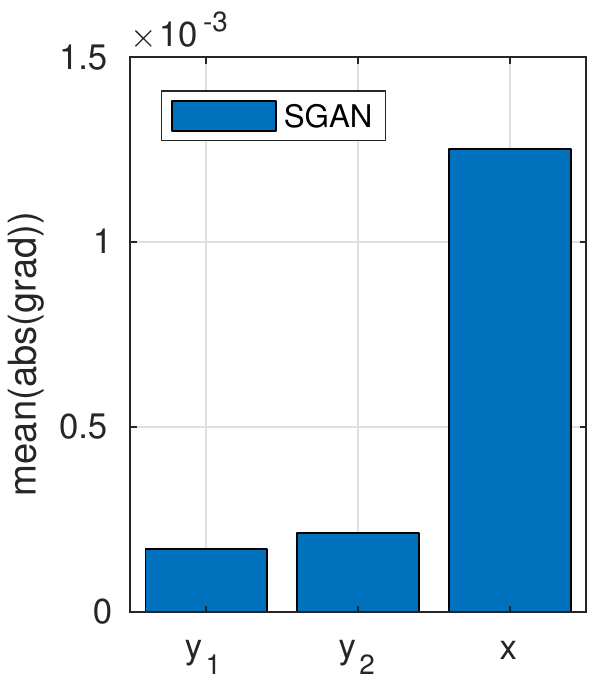}
    \end{minipage}%
}\par
\subfloat[DSGAN with cGAN]{%
    \begin{minipage}{\linewidth}
    \centering
    \includegraphics[width=\linewidth, valign=b]{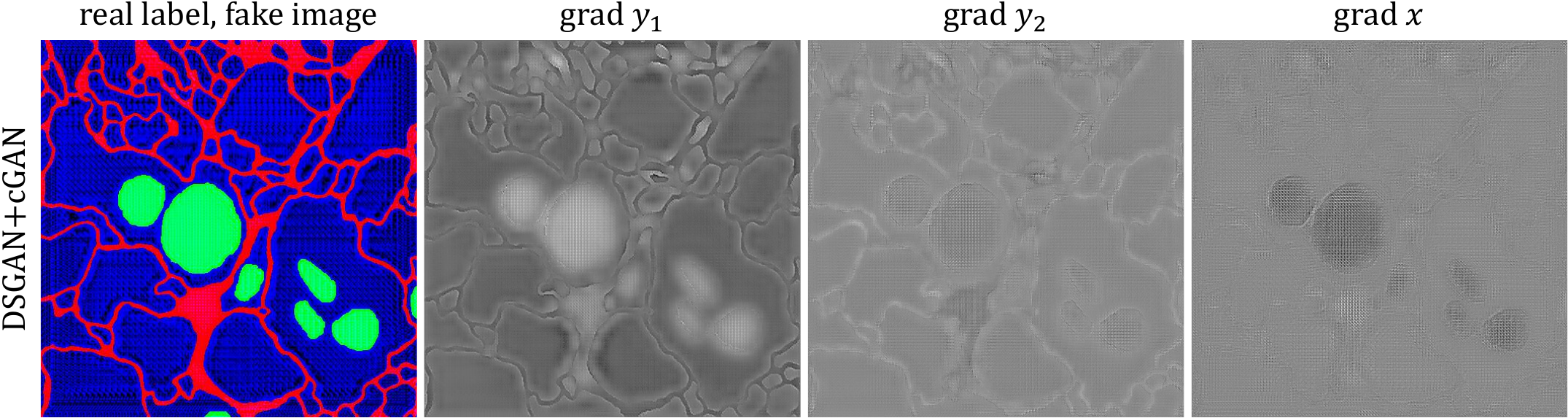}
    \includegraphics[scale=0.4, valign=b]{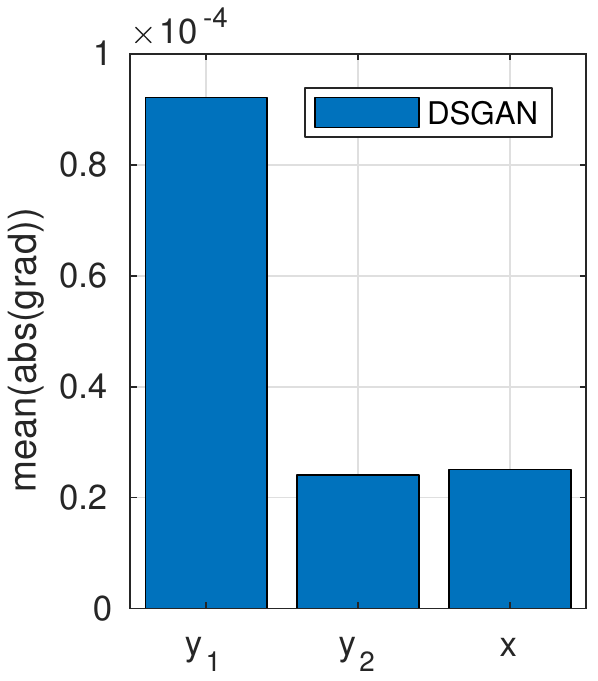}
    \end{minipage}%
}\par
\caption{Gradients of loss of $D_x$ w.r.t. input label-image pairs, when training SGAN/DSGAN with cGAN after 200 epochs. Real label and generated image are shown in a single image in RGB channels, gradient w.r.t. three channels are shown separately. The bar plots show the mean of absolute value of the gradients. For (a), the gradient w.r.t. image channel is larger than gradients w.r.t. labels, suggesting the discriminator learns to focus on translating label to image. For (b), the gradients for labels dominate, suggesting the discriminator is misled by the differences between real and synthetic labels.}
\label{fig:dsgan_grad}
\end{figure}

\end{appendix}

\end{document}